\newtheorem{theorem}{\bf Theorem}[section]
\newtheorem{remark}[theorem]{\bf Remark}
\newtheorem{lemma}[theorem]{\bf Lemma}
\newcommand{\beano}{\begin{eqnarray*}}
	\newcommand{\eeano}{\end{eqnarray*}}
\def\BibTeX{{\rm B\kern-.05em{\sc i\kern-.025em b}\kern-.08em
    T\kern-.1667em\lower.7ex\hbox{E}\kern-.125emX}}
\begin{document}
\title{\textsc{Two-Step Q-learning
}}
\author{V. Antony Vijesh\IEEEauthorrefmark {1}, Shreyas S R\IEEEauthorrefmark {2}
}

\maketitle

\begin{abstract}Q-learning is a stochastic approximation version of the classic value iteration. The literature has established that Q-learning suffers from both maximization bias and slower convergence. Recently, multi-step algorithms have shown practical advantages over existing methods. This paper proposes a novel off-policy two-step Q-learning algorithms, without importance sampling. With suitable assumption it was shown that, iterates in the proposed two-step Q-learning is bounded and converges almost surely to the optimal Q-values. This study also address the convergence analysis of the smooth version of two-step Q-learning, i.e., by replacing max function with the log-sum-exp function. The proposed algorithms are robust and easy to implement. Finally, we test the proposed algorithms on benchmark problems such as the roulette problem, maximization bias problem, and randomly generated Markov decision processes and compare it with the existing methods available in literature. Numerical experiments demonstrate the superior performance of both the two-step Q-learning and its smooth variants.
\end{abstract}

\begin{IEEEkeywords}
	Q-learning, Multi-step reinforcement learning, Markov decision problem.
\end{IEEEkeywords}
\footnotetext[1]{
	Department  of Mathematics, IIT Indore, Simrol, Indore-452020, Madhya Pradesh, India. \texttt{Email:vijesh@iiti.ac.in}}
\footnotetext[2]{Research Scholar, Department of Mathematics, IIT Indore, Shreyas S R is sincerely grateful to CSIR-HRDG, India (Grant No. 09/1022(0088)2019-EMR-I) for the financial support. \texttt{Email:shreyassr123@gmail.com, phd1901241006@iiti.ac.in.}}

\section{Introduction}
Making an optimal decision in a stochastic environment is an interesting and a challenging problem. Unlike many machine learning algorithms, Reinforcement Learning (RL) algorithms interact with an environment to learn an optimal decision or policy \cite{MR3889951}. In recent days various RL algorithms have been developed and they find their applications in different areas of science and technology \cite{fawzi2022discovering}\cite{silver2016mastering}. Q-learning is one of the classic off-policy single-step RL algorithms proposed by Watkins to find an optimal policy in an uncertain environment \cite{watkins1989learning}. Even though the Q-learning algorithm is effectively implemented in many real-world applications, such as robotics \cite{yang2004mobile}, traffic signal control \cite{5658157}, and agent based production scheduling \cite{wang2005application}, it suffers from maximization bias \cite{hasselt2010double}\cite{8695133}, and slower convergence \cite{azar2011speedy}\cite{kamanchi2019successive}. In particular, it has been observed that in certain situations, such as the roulette example mentioned in \cite{8695133}, Q-learning overestimates the optimal value even after large number of iterations. Consistent efforts are being made to address these issues effectively (for example, \cite{azar2011speedy}\cite{hasselt2010double}\cite{kamanchi2019successive}\cite{8695133}).
\\
Developing multi-step temporal difference algorithms have always been an interesting research topic in RL \cite{watkins1989learning, sutton1988learning,sutton1998introduction,peng1994incremental}. Initially, multi-step temporal difference algorithm for prediction problem is developed in \cite{sutton1988learning}. Also, in his seminal work, Watkins \cite{watkins1989learning}, explores the notion of multi-step returns and the trade-off between bias and variance. In \cite{watkins1989learning}, Watkin's also proposed a multi-step algorithm, a generalisation of the well known Q-learning algorithm for control problem without rigorous convergence analysis. This algorithm is also called as Watkin's Q($\lambda$). The drawback of the Watkin's Q($\lambda$) algorithm is that, it truncate the trajectory as soon as an exploratory action is chosen instead of a greedy action. Consequently, the Watkin's Q($\lambda$) will lack much of its advantage, if the exploratory actions are frequent in the initial learning phase \cite{sutton1998introduction}. By relaxing the truncation procedure in the Watkin's Q($\lambda$) algorithm, Peng and Williams proposed a multi-step Q-learning algorithm in \cite{peng1994incremental}. As mentioned in \cite{sutton1998introduction}, this algorithm can be viewed as an hybrid of SARSA($\lambda$) and Watkin's Q($\lambda$). In a similar vein, a multi-step Q-learning algorithm, naive Q($\lambda$) has been proposed in \cite{sutton1998introduction}. Naive Q($\lambda$) and Peng's and William's Q($\lambda$) do not terminate the backup when an exploratory action is taken but differ in evaluating temporal difference error. Even though the above multi-step RL algorithms for control were proposed long ago, the theoretical understanding of the multi-step algorithms regarding convergence was an open question until recently. In 2016, Harutyunyan et al. \cite{harutyunyan2016q}, constructed a novel off-policy operator, and the convergence of naive Q($\lambda$) was established under some restrictions on the behavioural policy, and the parameter $\lambda$ \cite{harutyunyan2016q}. In the same year, Munos et al. \cite{munos2016safe}, developed a multi-step algorithm Retrace($\lambda$) using a more general operator. The operator in \cite{munos2016safe}, unifies several existing multi-step algorithms. Further, sufficient conditions were provided to ensure the convergence of this multi-step method for control problems. As a corollary, the convergence of Watkin's Q($\lambda$) method is obtained in \cite{munos2016safe}. Recently, the theoretical convergence analysis of Peng's and William's Q($\lambda$) is discussed  in \cite{kozuno2021revisiting}. In \cite{kozuno2021revisiting}, Kozuno et al. showed that the conjecture by Sutton and Barto \cite{sutton1998introduction} regarding the limit of Peng's and William's Q($\lambda$) method need not be true. More specifically, it was shown that if the behaviour policy is fixed, then the Q-values produced by the Peng's and William's  Q($\lambda$) need not be the optimal value. Note that unlike Q-learning, the above multi-step algorithms require suitable additional assumption on behavioural policy for their convergence \cite{harutyunyan2016q,kozuno2021revisiting,munos2016safe}. Earlier works have demonstrated the practical usage of multi-step algorithms in \cite{cichosz1994truncating}, and \cite{van2016true}. In addition, recent literature observes that, multi-step algorithms combined with deep learning are more suitable for several control tasks compared to single-step algorithms \cite{mousavi2017applying,harb2017investigating,hessel2018rainbow,barth2018distributed,kapturowski2018recurrent,daley2019reconciling}. Recently, clubbing the algorithms multi-step SARSA and tree backup algorithms, an interesting multi-step algorithm Q($\sigma$) was studied in \cite{de2018multi}. The convergence analysis was also discussed for the one-step Q($\sigma$) algorithm in \cite{de2018multi}. Also, in \cite{6796861}, the convergence of TD($\lambda$) algorithms for prediction problems was presented using the techniques from stochastic approximation. It was mentioned in \cite{de2018multi} that, the multi-step temporal difference algorithms in \cite{de2018multi} and \cite{6796861} could decrease the bias of the update at the cost of an increase in the variance.     \\
Note that, the on-policy algorithms are a well-known approach to address the problem of policy evaluation \cite{MR3889951}. However, due to the advantages of off-policy algorithms, efforts are being made to develop the off-policy multi-step algorithms for both policy evaluation and control problems \cite{ harutyunyan2016q,munos2016safe,precup2000eligibility, mahmood2017multi}. Off-policy algorithms refer to, learning about the target policy, while data gets generated according to a behavioural policy. Off-policy algorithms may not require any relationship between the behaviour and target policies, leading to higher flexibility in exploration strategies. Most of the off-policy algorithms use an importance sampling ratio. However, it is observed that the off-policy algorithms using importance sampling suffer from high variance \cite{MR3889951}. Consequently, the development of off-policy algorithms, which do not use importance sampling becomes an important research topic. To this end, this manuscript proposes convergent off-policy multi-step algorithms that do not use importance sampling. The proposed two-step Q-learning algorithm is a hybrid approach that combines the elements from Q-learning and a two-step tree backup algorithm. More specifically, the proposed two-step Q-learning algorithm uses additional samples, like a two-step tree backup algorithm, while retaining the terms from Q-learning. Interestingly, the additional term incorporated in the two-step Q-learning makes a considerable difference in the learning process. Moreover, the importance of the additional term in the algorithm becomes inconsequential as the learning progresses to avoid the issue of variance. Under suitable assumptions, the boundedness and convergence of the proposed algorithms are provided. 
More specifically, the almost sure convergence of the proposed iterative scheme is obtained using the stochastic approximation result from \cite{singh2000convergence}.  \\
Studying Q-learning algorithm by substituting the maximum operator with a suitable smooth approximate operator attracts considerable attention in the recent literature (for example, see \cite{asadi2017alternative,haarnoja2017reinforcement,9540362,nachum2017bridging,song2019revisiting}). Owing to the advantages mentioned in the literature, apart from the regular two-step Q-learning (TSQL) algorithm, we propose a smooth two-step algorithm (S-TSQL) and discuss its convergence. Finally, both TSQL and S-TSQL are tested using a classic maximization bias example, roulette example, and a hundred randomly generated Markov decision processes (MDPs). The empirical performance demonstrates the proposed algorithms' superior performance compared to the existing algorithms. 
\subsection{Literature Review}
Several alternative algorithms are proposed to improve the Q-learning algorithm to reduce bias and fasten the convergence. In \cite{hasselt2010double}, the double Q-learning algorithm is proposed, which uses two Q-estimators to estimate the maximum expected action value, and it is shown to underestimate maximum expected action values. By adding a suitable bias corrected term at each step of the Q-learning iteration, a novel bias-corrected Q-learning algorithm was studied in \cite{8695133}. In this approach \cite{8695133},  the bias due to the maximum operator is handled by controlling the bias due to the stochasticity in the reward and the transition. Self-correcting Q-learning \cite{zhu2021self} is a Q-learning variant designed to mitigate maximization bias by utilizing self-correcting estimator. It is worth to mention that the self correcting estimator at $(n+1)^{th}$ step depend on the Q-values at $n$ and $n-1$ steps. It is interesting to note that the influence of the bias corrected term in \cite{8695133} as well as the self correcting estimator in \cite{zhu2021self} is noticeable only in the early stages, and as the iterates progress, both of their influence in their respective algorithms is inconsequential.    In \cite{zhang2017weighted}, authors propose weighted double Q-learning to balance the overestimation by single estimator and underestimation by double estimator.  Speedy Q-learning (SQL) was introduced to tackle the issue of slower convergence in Q-learning \cite{azar2011speedy}. In each iteration of SQL, two successive estimates of the Q-function are used to update the iteration. Further, the boundedness and almost sure convergence of SQL is discussed in \cite{azar2011speedy}. Recently, by clubbing SQL and successive over-relaxation method, an interesting version of Q-learning is discussed in \cite{12345} for a specific class of MDPs.   \\It is worth mentioning that the order of convergence of Q-value iteration is accelerated by replacing the maximum operator with the log-sum-exp (LSE) operator \cite{9540362}\cite{MR1315986}. In this direction, the performance of RL algorithms using an alternate smooth maximum operator called as a mellowmax operator for model-free algorithms has been discussed in \cite{asadi2017alternative}. In addition, \cite{song2019revisiting} investigated the Bellman operator with various smooth functions combined with deep Q-network and double deep Q-network, and their performance was evaluated using Atari games.  The boundedness of the Q-values plays a crucial role in the convergence analysis of different types of Q-learning. In this direction the boundedness of the Q-values in the well known Q-learning and double Q-learning is discussed in \cite{doi:10.1137/S0363012997331639,gosavi2006boundedness} and  \cite{xiong2020finite} respectively. 

This manuscript is organized as follows. To make the paper self-contained, basic results as well as the notations used in the paper are presented in Section 2. The proposed algorithms are presented in Section 3. The convergence analysis of the proposed algorithms are discussed in Section 4. The performance of the proposed algorithms tested with a few benchmark problems are  discussed in Section 5. We conclude our discussion in Section 6 by highlighting the conclusion of the proposed work.
\section{Preliminaries}
In this manuscript, an infinite horizon discounted Markov decision process (MDP) with a finite state space $S$, and finite action space, $A$ is considered. The transition probability $p(j|i,a)$ denotes the probability of ending up at state $j$ when action $a$ is picked in state $i$. Let $c(i, a,j) \in \mathbb{R}$ denote the reward for taking an action $a$ in state $i$ and reaching state $j$. Let $\beta \in [0,1)$ be the discount factor. Throughout this paper we assume that the reward is uniformly bounded, i.e., $\exists$  $C_{\max}$ such that  $|c(i,a,j)|\leq C_{\max}$ for all $(i,a,j) \in S \times A \times S $. Further, we assume that the MDP is communicating.  
A policy $\pi$ can be considered as a map from the state to a set of probability distribution over $A$. The objective of the Markov decision problem is to find an optimal policy $\pi^*$.  The existence of such an optimal policy for the finite state action MDP is guaranteed in the classical MDP theory \cite{MR1270015}. The problem of finding an optimal policy reduces to solving the Bellman's optimality equations given by :
\begin{equation}\label{eq1}
Q^*(i,a)=c(i,a)+\beta\, \max_{b \in A}\,Q^*(j,b), \quad \forall \,\,(i,a) \in S \times A,
\end{equation}
where 
\begin{equation*}
c(i,a)=\sum_{j=1}^{|S|}p(j|i,a)c(i,a,j).
\end{equation*}
Corresponding Q-Bellman's operator 
$H:\mathbb{R}^{|S||A|}\rightarrow \mathbb{R}^{|S||A|}$ is defined as \\
\begin{equation*}
(HQ)(i,a)=c(i,a)+\beta \sum_{j=1}^{|S|}p(j|i,a)\max_{b \in A}\,Q(j,b).
\end{equation*}
It is well known that $H$ is  a max-norm contraction with contraction factor $\beta$    \cite{bertsekas1996neuro}. Consequently, it has a unique fixed point, say $Q^*$. 		\\
From this $Q^*$ one can obtain an optimal policy $\pi^*$ as follows

$$ \pi^*(i)\in \text{arg} \max\limits_{a \in A}\, Q^*(i,a), \quad \forall i \in S.$$
\noindent
Also, it is worth mentioning that Q-learning can be considered as a stochastic approximation version of the value iteration \cite{6796861}. \\
Let $Q_n(i,a)$ be the approximation for $Q^*(i,a)$ at step $n$. Using the sample $\{i,a,j,c(i,a,j)\}$ and $Q_n(i,a)$ one can get the improved approximation $Q_{n+1}(i,a)$ for $Q^*(i,a)$ as follows :
\begin{multline*}
Q_{n+1}(i,a) = (1-\alpha_n(i,a))Q_n(i,a)+\alpha_n(i,a)   (c(i,a,j)+\\\beta \, \max_{b \in A} Q_n(j,b)),
\end{multline*}
where $0\leq \alpha_n(i,a) \leq 1$. Using techniques available from stochastic approximation, it is shown that under suitable assumptions the above update rule converges to the fixed point of $H$ with probability one \cite{6796861}. \\
The theoretical convergence of the above iterative procedure is discussed in the literature using various techniques \cite{watkins1992q,6796861,tsitsiklis1994asynchronous,doi:10.1137/S0363012997331639,lee2020unified}. It is worth mentioning that, in \cite{6796861}, by proving the convergence of a generalized version of stochastic iterative procedure, the convergence of Q-learning algorithm is obtained. The convergence result in \cite{6796861} is further relaxed in \cite{singh2000convergence} and finds its application in proving convergence of various RL algorithms \cite{hasselt2010double,de2018multi}. In this manuscript too, the convergence of the proposed iterative procedure is obtained using the result from \cite{singh2000convergence}. In other words, the following lemma from \cite{singh2000convergence} is used as key tool to prove the convergence of the proposed iterative scheme. 

\begin{lemma}(Lemma 1, \cite{singh2000convergence})\label{fl}
	Consider a stochastic process $ \left(\Psi_{n},F_n,\alpha_n\right) $ $n \geq 0$, where $\Psi_{n},F_n,\alpha_n:X\rightarrow \mathbb{R}$ satisfy the following recursive relations :
	\begin{align}
	&\Psi_{n+1}(x)=(1-\alpha_n(x))\Psi_{n}(x)+\alpha_n(x)F_n(x),  \end{align}
	where $x \in X$, $\; n \geq 0.$ Then $\Psi_{n}$ converges to zero with probability one $(w.p.1)$ as $n$ tends to $\infty$, if the following properties hold:\\
	1. The set $X$ is finite.\\
	2. $0\leq\alpha_n(x)\leq 1$, $\sum_{n=1}^{\infty}\alpha_n(x)=\infty$, $\sum_{n=1}^{\infty}\alpha^2_n(x)<\infty$ $w.p.1$.\\
	3. $\|E[F_n|\mathcal{F}_n]\| \leq \kappa \|\Psi_n\|+\zeta_n$, where $\kappa \in[0,1)$ and $\zeta_n$ converges to zero $w.p.1$.\\
	4. $Var[ F_n(x)|\mathcal{F}_n]\leq K(1+ \|\Psi_n\|)^2$, where $K$ is some constant. \\Here $\mathcal{F}_n$ is an increasing sequence of $\sigma$-fields. 
	Further, $\alpha_0$ and $\Psi_0$ are $\mathcal{F}_0$ measurable, and $\alpha_n$, $\Psi_n$, and $F_{n-1}$ are $\mathcal{F}_n$ measurable, for all $n\geq1$. The notation $\|.\|$ refers to any weighted maximum norm. 
\end{lemma}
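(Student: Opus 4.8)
The plan is to treat this as a stochastic approximation statement and prove it by splitting the driving term into its conditional mean --- which is contractive up to a vanishing error --- and a zero-mean martingale-difference noise, then showing that the noise contribution washes out while the contraction geometrically squeezes the iterate to zero. Throughout, Condition~1 (finiteness of $X$) is what lets me argue one coordinate at a time and then pass to the weighted max-norm $\|\cdot\|$ by taking a maximum over finitely many $x$.

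First I would write $F_n(x) = E[F_n(x)\mid \mathcal{F}_n] + w_n(x)$ with $w_n(x) := F_n(x) - E[F_n(x)\mid\mathcal{F}_n]$, so that $E[w_n(x)\mid \mathcal{F}_n]=0$ and, by Condition~4, $E[w_n(x)^2\mid\mathcal{F}_n]\le K(1+\|\Psi_n\|)^2$. Substituting into the recursion gives
\begin{equation*}
\Psi_{n+1}(x) = (1-\alpha_n(x))\Psi_n(x) + \alpha_n(x)\,E[F_n(x)\mid\mathcal{F}_n] + \alpha_n(x)\,w_n(x).
\end{equation*}
I would then isolate the noise through the auxiliary recursion $U_{n+1}(x) = (1-\alpha_n(x))U_n(x) + \alpha_n(x) w_n(x)$, $U_0 = 0$, and prove the key lemma that $U_n \to 0$ w.p.1 provided $\|\Psi_n\|$ stays bounded. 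This uses the Robbins--Monro conditions $\sum_n \alpha_n(x)=\infty$ and $\sum_n \alpha_n^2(x)<\infty$ from Condition~2 together with a martingale convergence argument: the quadratic variation accumulated by $U_n$ is controlled by $\sum_n \alpha_n^2(x)\,E[w_n(x)^2\mid\mathcal{F}_n]$, which is summable once $\|\Psi_n\|$ is bounded.

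Next I would establish boundedness, i.e.\ $\sup_n \|\Psi_n\| < \infty$ w.p.1. Here the contraction bound $\|E[F_n\mid\mathcal{F}_n]\|\le \kappa\|\Psi_n\|+\zeta_n$ with $\kappa<1$ from Condition~3 is essential: the mean part pulls the iterate inward by a factor $\kappa$, and since $\zeta_n\to 0$ it is eventually dominated, so $\Psi_n$ cannot escape to infinity faster than the noise permits. Once boundedness is in hand the noise lemma applies unconditionally, and I would finish with an iterated-contraction (squeezing) argument: starting from an a.s.\ bound $B_0$ on $\limsup_n\|\Psi_n\|$, Condition~3 with $\zeta_n\to0$ and the vanishing of $U_n$ forces $\limsup_n \|\Psi_n\|\le \kappa B_0 + \epsilon$ for every $\epsilon>0$, hence $\limsup_n \|\Psi_n\|\le \kappa B_0$; iterating over rounds yields $\limsup_n\|\Psi_n\|\le \kappa^k B_0 \to 0$, giving $\Psi_n\to0$ w.p.1.

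The main obstacle is the circular coupling in Conditions~3--4: the variance bound is allowed to grow with $\|\Psi_n\|$ like $(1+\|\Psi_n\|)^2$, so one cannot control the noise without already knowing the iterate is bounded, yet boundedness is itself what the noise control is meant to support. The standard resolution --- and the step I expect to be delicate --- is a rescaling argument that normalizes the recursion by a running upper bound on $\|\Psi_n\|$, reducing the variable-variance case to a bounded-variance one before the martingale convergence theorem is invoked; making this rescaling rigorous, and correctly interleaving it with the contraction estimate, is where the real work lies.
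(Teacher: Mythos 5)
There is nothing in the paper to compare your proposal against: the paper does not prove this lemma at all, but imports it verbatim (as Lemma~1 of Singh et al.\ \cite{singh2000convergence}) and uses it purely as a black-box tool in the convergence proofs of Theorems \ref{thm2} and \ref{thm3}. Judged against the actual proof in the cited literature (Singh et al.\ 2000, which extends Jaakkola--Jordan--Singh 1994 to allow the vanishing slack $\zeta_n$), your outline reconstructs essentially that standard argument: the decomposition $F_n = E[F_n\mid\mathcal{F}_n] + w_n$ into a contractive mean and martingale-difference noise, the auxiliary noise recursion killed by the Robbins--Monro conditions and a martingale convergence argument, the rescaling device to break the circularity between the $(1+\|\Psi_n\|)^2$ variance bound and boundedness, and the iterated-contraction squeeze $\limsup\|\Psi_n\|\le\kappa^k B_0\to 0$. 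Two minor points you should make explicit when writing it up: in the squeeze step the slack $\zeta_n$ must be absorbed into the $\epsilon$ (legitimate since $\zeta_n\to 0$ w.p.1, but it is the one place where this lemma genuinely differs from the 1994 version, where $\zeta_n\equiv 0$); and the conditional noise lemma should be applied on each event $\{\sup_n\|\Psi_n\|\le B\}$, $B\in\mathbb{N}$, and the conclusion assembled over the countable union, since boundedness is only almost sure rather than uniform. Neither is a gap in the plan --- your proposal is a faithful and correctly structured account of the proof the paper chose to cite rather than reproduce.
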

It can be observed from the literature that boundedness plays an important role in proving the convergence of several stochastic iterative procedure in RL (for example, \cite{azar2011speedy,doi:10.1137/S0363012997331639,singh2000convergence}). It is important to note that boundedness is a necessary condition in the convergence analysis of the proposed algorithms. In this context, the following lemma will be used to show the boundedness of the proposed algorithms
\begin{lemma}(Proposition 3.1 \cite{MR1976398})\label{k1}
	If $\sum_{n=1}^{\infty}|a_n| < \infty$, then the $\prod _{n=1}^{\infty}(1+a_n)$ converges.
\end{lemma}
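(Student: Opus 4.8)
The plan is to reduce the convergence of the infinite product to the convergence of an associated series of logarithms, exploiting the hypothesis $\sum_{n=1}^{\infty}|a_n| < \infty$. First I would observe that absolute summability forces $a_n \to 0$, so there exists an index $N_0$ with $|a_n| \le 1/2$ for all $n \ge N_0$; in particular $1+a_n \in [1/2,\, 3/2]$ is strictly positive on this tail, so each $\log(1+a_n)$ is well defined there. Isolating the tail in this way is what lets the logarithm be applied safely.

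The key estimate is the elementary bound $|\log(1+x)| \le 2|x|$ valid for $|x| \le 1/2$, which follows from the mean value theorem applied to $\log(1+\cdot)$ together with the bound $1/(1+\xi) \le 2$ on that range. Applying this termwise on the tail yields $\sum_{n \ge N_0} |\log(1+a_n)| \le 2 \sum_{n \ge N_0}|a_n| < \infty$, so the series $\sum_{n \ge N_0}\log(1+a_n)$ converges absolutely, hence to some finite limit $L$.

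The third step transfers this conclusion back to the product. Writing the tail partial product as $\prod_{n=N_0}^{N}(1+a_n) = \exp\!\left(\sum_{n=N_0}^{N}\log(1+a_n)\right)$ and using continuity of $\exp$, the tail partial products converge to $e^{L} \neq 0$ as $N \to \infty$. Multiplying by the fixed finite factor $\prod_{n=1}^{N_0-1}(1+a_n)$ shows that the full sequence of partial products converges to $\left(\prod_{n=1}^{N_0-1}(1+a_n)\right)e^{L}$, which establishes convergence of the infinite product.

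The only genuine subtlety, rather than a real obstacle, is guaranteeing that the logarithms are legitimately defined: this is precisely why I split off the initial $N_0-1$ factors and argue only on the tail where $1+a_n > 0$. The remaining ingredients, namely the logarithmic bound and the absolute convergence of the image series, are routine, so once the tail is isolated the argument closes with no further difficulty.
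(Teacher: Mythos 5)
The paper gives no proof of this lemma at all --- it is quoted verbatim as Proposition 3.1 of the cited reference (Stein--Shakarchi) --- and your argument is correct and is essentially the standard proof given there: pass to a tail where $|a_n|\le 1/2$, bound $|\log(1+a_n)|\le 2|a_n|$ so the log-series converges absolutely, and exponentiate back, reattaching the finite prefix. The one caveat is that your mean value theorem step presupposes real $a_n$ (for complex terms one would instead use the power-series bound $|\log(1+z)|\le |z|/(1-|z|)\le 2|z|$ for $|z|\le 1/2$), but in this paper's application the terms are the nonnegative reals $\alpha_i|\theta_i|\beta^2$, so your argument fully covers the intended use.
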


Considering the applications and advantages of using smooth maximum in place of maximum operator in Bellman's equation, many authors \cite{asadi2017alternative,song2019revisiting,9540362} try to solve the following version of Bellman's equations: 
\begin{align}
&Q(i,a) \nonumber\\=& \,c(i,a)+\beta \sum_{j=1}^{|S|}p(j|i,a)\dfrac{1}{N}\log \sum_{b=1}^{|A|}e^{NQ(j,b)}, \, \forall \,\,(i,a) \in S \times A.
\end{align}
The corresponding smooth Q-Bellman operator $U: R^{|S||A|} \rightarrow R^{|S||A|}$ defined as follows

\begin{equation}
\begin{split}
(UQ)(i,a)=c(i,a)+\beta \sum_{j=1}^{|S|}p(j|i,a)\dfrac{1}{N}\log \sum_{b=1}^{|A|}e^{NQ(j,b)}.
\end{split}
\end{equation}
It is interesting to note that $U$ is a contraction operator with respect to the maximum norm with contraction constant $\beta$ \cite{9540362}. We conclude this section by presenting the relation between the fixed point of the operator $H$ and the fixed point of the operator $U$.
\begin{theorem}(Lemma 3, \cite{9540362})
	Suppose $\hat{ Q }$ and $\tilde{Q}$ are the fixed points of $U$ and $H$ respectively. Then, $$||\textcolor{black}{\hat {Q}-\tilde{Q}}||\leq \dfrac{\textcolor{black}{\beta}}{N(1-\textcolor{black}{\beta})}\log(|A|).$$
\end{theorem}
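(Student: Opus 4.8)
The plan is to exploit the fact that the log-sum-exp is a uniform pointwise approximation of the maximum, and then to convert this closeness of the operators $U$ and $H$ into closeness of their fixed points via a standard perturbation estimate for contractions. The norm throughout is the max norm, with respect to which both $U$ and $H$ are $\beta$-contractions (the contraction of $H$ is quoted in Section~2, and that of $U$ is the cited result from \cite{9540362}). First I would record the elementary two-sided bound for the log-sum-exp function: for any finite collection of reals $\{x_b\}_{b=1}^{|A|}$,
\[
\max_b x_b \;\le\; \frac{1}{N}\log\sum_{b=1}^{|A|} e^{N x_b} \;\le\; \max_b x_b + \frac{1}{N}\log|A|.
\]
The left inequality follows by retaining only the largest exponential in the sum, and the right inequality follows by bounding every exponential by the largest one, so that the sum is at most $|A|\,e^{N\max_b x_b}$ and the logarithm contributes the extra $\tfrac{1}{N}\log|A|$ term.

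Next I would apply this bound with $x_b = Q(j,b)$ inside the two operator definitions. Since $U$ and $H$ differ only in replacing $\max_b Q(j,b)$ by $\tfrac{1}{N}\log\sum_b e^{NQ(j,b)}$, and since $\sum_{j} p(j|i,a) = 1$, the displayed inequality yields, for every $Q$ and every $(i,a)$,
\[
0 \;\le\; (UQ)(i,a)-(HQ)(i,a) \;\le\; \beta\sum_{j=1}^{|S|} p(j|i,a)\,\frac{1}{N}\log|A| \;=\; \frac{\beta}{N}\log|A|.
\]
Taking the max over $(i,a)$ gives the uniform closeness $\|UQ - HQ\| \le \tfrac{\beta}{N}\log|A|$, valid for \emph{every} $Q$.

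Finally I would run the perturbation argument using $\hat{Q}=U\hat{Q}$ and $\tilde{Q}=H\tilde{Q}$. Inserting $H\hat{Q}$ and applying the triangle inequality,
\[
\|\hat{Q}-\tilde{Q}\| = \|U\hat{Q}-H\tilde{Q}\| \le \|U\hat{Q}-H\hat{Q}\| + \|H\hat{Q}-H\tilde{Q}\|.
\]
The first term is at most $\tfrac{\beta}{N}\log|A|$ by the uniform estimate (with $Q=\hat{Q}$), while the second is at most $\beta\|\hat{Q}-\tilde{Q}\|$ because $H$ is a $\beta$-contraction in the max norm. Rearranging $(1-\beta)\|\hat{Q}-\tilde{Q}\| \le \tfrac{\beta}{N}\log|A|$ and dividing by $1-\beta>0$ produces exactly the claimed bound. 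The only genuinely content-bearing step is the two-sided log-sum-exp inequality establishing uniform closeness of $U$ and $H$; everything afterward is the textbook stability-of-fixed-points estimate for two uniformly close contractions, so that is where I would focus the care, whereas the rest is routine.
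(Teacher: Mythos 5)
Your proof is correct, and it is essentially the standard argument: the paper itself does not prove this statement but imports it from Lemma 3 of the cited reference \cite{9540362}, whose proof proceeds along the same lines you describe (the two-sided log-sum-exp bound giving $\|UQ-HQ\|\leq \frac{\beta}{N}\log|A|$ uniformly in $Q$, followed by the triangle-inequality perturbation estimate using the $\beta$-contraction of $H$). Nothing is missing; your write-up is a complete, self-contained substitute for the citation.
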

From the above estimate it is evident that for a large $N$, the fixed point of $U$ is a good approximation for the fixed point of $H$.
\section{Proposed Algorithm}In this section, we present the proposed algorithms. As discussed in the previous section, for Q-learning, at each step of iteration, we generate a reward $c'$ and the next state $s'$. In the proposed algorithm, however, at each step of the iteration, in addition to $c'$ and $s'$, we generate $c''$ and $s''$ by taking an action $a'$ at the state $s'$. Using this information, the update rule for two-step Q-learning is provided in Step 6 of Algorithm 1. 
\begin{algorithm}[h]
	\caption{Two-step Q-learning (TSQL)}
	
	\begin{algorithmic}[1]
		\renewcommand{\algorithmicrequire}{\textbf{Input:}}
		\REQUIRE Initial Q-Vector: $Q_0$; Discount factor: $\beta $, step size rule $\alpha_n$ and sequence $\theta_n$, Number of iterations: $T$, Behaviour policy $\mu$ which generates every state action pairs infinitely often.
		\STATE \textbf{for} $n = 0, 1,\cdots, T-1$ \textbf{do} 
		\STATE \quad Decide $a_n$ at $s_n$ according to policy $\mu$
		\STATE \quad  Observe $s'_n$ and $c'_n$	
		\STATE \quad Decide $a'_n$ at $s'_n$ according to policy $\mu$
		\STATE \quad Observe  $s''_n$ and $c''_n$ 
		\STATE \quad \textbf{Update rule:} \\
		\quad $Q_{n+1}(s_n,a_n)=(1-\alpha_n(s_n,a_n))Q_n(s_n,a_n)+\alpha_n(s_n,a_n)\Big(c'_n+\beta\, \max_{b \in A}Q_n(s'_n,b)$\\
		\hspace{2cm} $+\beta \theta_n (c''_n+\beta\, \max_{e \in A}Q_n(s''_n,e) ) \Big)$
		\STATE \quad $s_{n+1}=s''_n$
		\STATE \textbf{end for}
		\RETURN $Q_{T-1}$
		
	\end{algorithmic}
	
\end{algorithm}\\
As it can be seen from the Algorithm 1, for samples $\{i,a,j,c(i,a,j),d,k,c(j,d,k)\}$ generated at time step $n$, the update rule for TSQL is of the following form:
\begin{equation}
\begin{split}
Q_{n+1}(i,a)=(1-\alpha_n(i,a))Q_n(i,a)+\alpha_n(i,a)(c(i,a,j)+&\\\beta \max_{b \in A}Q_n(j,b)+\beta \theta_n (c(j,d,k)+\beta \max_{e \in A}Q_n(k,e) ) ).
\end{split}
\end{equation}
\begin{remark}\label{rem1}
	Let $\theta_n$ be a sequence of real numbers  satisfying  the following conditions:\\
	(i) $|\theta_n|\leq 1, \quad \forall n \in N.$\\
	(ii) ${|\theta_n|}$ is monotonically decreasing to zero.\\
	(iii) $\sum_{n=1}^{\infty}\alpha_n|\theta_n| < \infty$.\\
	The requirement of these conditions on $\theta_n$ will be made clear in the next section. More specifically, these conditions will be necessary in proving the boundedness of the proposed algorithms.
\end{remark}

\begin{remark}
	The smooth two-step Q-learning algorithm (S-TSQL) is Algorithm 1, with the maximum operator replaced by the LSE function in Step 6. Specifically, for sample $\{i,a,j,c(i,a,j),d,k,c(j,d,k)\}$ generated at time step $n$, the update rule for S-TSQL is of the following form:
	\begin{align}\label{4.5}
&Q_{n+1}(i,a)=(1-\alpha_n(i,a))Q_n(i,a)+\alpha_n(i,a) \nonumber\\
&\hspace*{0.6cm}\left(c(i,a,j)+ \dfrac{\beta}{N}\log \sum_{b=1}^{|A|}e^{NQ_n(j,b)}\right)\\
& \hspace*{1cm}+\beta \theta_n\alpha_n(i,a) \left(c(j,d,k)+ \dfrac{\beta}{N}\log \sum_{e=1}^{|A|}e^{NQ_n(k,e)}\right).\nonumber
	\end{align}
\end{remark}
\begin{remark}
	It is important to mention that, when evaluating the LSE operator in the S-TSQL algorithm, we use an equivalent representation discussed in \cite{MR4328385} to avoid the overflow error that occurs in the direct evaluation of the LSE operator, as mentioned in \cite{MR4328385}. This representation is given by:
	\begin{equation*}
	\dfrac{1}{N}\log \left(\sum_{b=1}^{|A|}e^{NQ(j,b)-Ne}\right)+e,
	\end{equation*}
	where $e = \max_{b\in A} Q(j,b)$.
\end{remark}
\textbf{Assumption 1:} Throughout this paper we assume that $\theta_n$ satisify the conditions mentioned in Remark \ref{rem1}.
\section{Main Results and Convergence Analysis}
In this section, the almost sure convergence of the proposed algorithms are discussed using the stochastic approximation techniques in \cite{singh2000convergence}. 
Before proving the main convergence theorem, first we address the boundedness of Q-values from the two-step Q-learning algorithm. The following lemma ensure the boundedness of the Q-values from the two-step Q-learning algorithm.
\begin{lemma}\label{lemma3}Let $Q_n(i,a)$ be the value corresponding to a state-action pair $(i,a)$ at $n^{th}$ iteration of TSQL. Suppose $\|Q_0\| \leq \dfrac{C_{\max}}{1-\beta}$\,. Then, $\|Q_n\| \leq M, \forall n \in \mathbb{N}, \,\text{where}\,\, M=\dfrac{C_{\max}}{1-\beta}(1+\beta |\theta_0| ) \prod_{i=1}^{\infty} (1+\alpha_i|\theta_i| \beta^2)$.
\end{lemma}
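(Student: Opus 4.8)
The plan is to reduce the vector update to a scalar recursion for the max-norm and then bound that recursion by a convergent infinite product. First I would note that at step $n$ only the component $(s_n,a_n)$ is modified while every other entry is left unchanged; hence it suffices to control the updated entry, since for the rest $|Q_{n+1}(i,a)|=|Q_n(i,a)|\le\|Q_n\|$. Applying the triangle inequality to the update rule together with $|c(\cdot,\cdot,\cdot)|\le C_{\max}$ and $\max_b|Q_n(j,b)|\le\|Q_n\|$, and grouping the two one-step returns into the common factor $(C_{\max}+\beta\|Q_n\|)$, I would obtain the scalar recursion
\begin{equation*}
\|Q_{n+1}\|\le(1-\alpha_n)\|Q_n\|+\alpha_n\big(C_{\max}+\beta\|Q_n\|\big)\big(1+\beta|\theta_n|\big).
\end{equation*}

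Next I would treat the first iteration separately. Since $\|Q_0\|\le\frac{C_{\max}}{1-\beta}$, the algebraic identity $C_{\max}+\beta\frac{C_{\max}}{1-\beta}=\frac{C_{\max}}{1-\beta}$ collapses the recursion at $n=0$ to $\|Q_1\|\le\frac{C_{\max}}{1-\beta}(1+\alpha_0\beta|\theta_0|)\le\frac{C_{\max}}{1-\beta}(1+\beta|\theta_0|)$, using $\alpha_0\le1$; this isolates the special first factor $(1+\beta|\theta_0|)$ appearing in $M$. For $n\ge1$ I would then run an induction on the scaled quantity $r_n$ defined through $\|Q_n\|\le\frac{C_{\max}}{1-\beta}r_n$, starting from $r_1=1+\beta|\theta_0|$. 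The identity $C_{\max}+\beta\frac{C_{\max}}{1-\beta}r_n=\frac{C_{\max}}{1-\beta}\big(1+\beta(r_n-1)\big)$ lets me rewrite the recursion purely in terms of $r_n$.

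The crux of the argument — and the step I expect to be most delicate — is extracting the factor $\beta^2$ rather than a cruder $\beta$ in the product, which forces me to exploit the contractive slack of the one-step return $(C_{\max}+\beta\|Q_n\|)$ relative to $\frac{C_{\max}}{1-\beta}$. After substituting the scaled recursion and simplifying, the desired bound $r_{n+1}\le r_n(1+\alpha_n\beta^2|\theta_n|)$ is seen to be equivalent to the inequality $\alpha_n(1-\beta)\big((1+\beta|\theta_n|)-r_n\big)\le0$, that is, to the side condition $r_n\ge1+\beta|\theta_n|$. I would verify this from Assumption 1: $r_n$ is non-decreasing in $n$ while $|\theta_n|$ is monotonically decreasing, so $r_n\ge r_1=1+\beta|\theta_0|\ge1+\beta|\theta_n|$ for all $n\ge1$, and the induction closes, giving $\|Q_n\|\le\frac{C_{\max}}{1-\beta}(1+\beta|\theta_0|)\prod_{i=1}^{n-1}(1+\alpha_i\beta^2|\theta_i|)$ for every $n\ge1$. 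Finally, since $\sum_{i=1}^{\infty}\alpha_i\beta^2|\theta_i|\le\beta^2\sum_{i=1}^{\infty}\alpha_i|\theta_i|<\infty$ by Remark \ref{rem1}(iii), Lemma \ref{k1} guarantees that $\prod_{i=1}^{\infty}(1+\alpha_i\beta^2|\theta_i|)$ converges; as all factors exceed one, each partial product is dominated by the infinite product, yielding $\|Q_n\|\le M$ for all $n$.
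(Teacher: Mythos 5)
Your proof is correct and follows essentially the same route as the paper's: an induction that extracts the factor $(1+\alpha_n\beta^2|\theta_n|)$ at each step and shows the leftover term is nonpositive using the monotonicity $|\theta_n|\le|\theta_0|$ — your side condition $r_n\ge 1+\beta|\theta_n|$ is precisely the paper's inequality $Z_n\le 0$ in normalized form, and both arguments finish by invoking summability of $\alpha_i|\theta_i|$ to make the infinite product finite. The only differences are presentational: you rescale by $\tfrac{C_{\max}}{1-\beta}$, handle the untouched entries explicitly, and cite Lemma \ref{k1} inside the proof, points the paper leaves implicit.
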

\begin{proof}
	We obtain the proof by applying induction on $n$. Let $(i,a) \in S \times A$
	\begin{dmath*}
		\left|Q_{1}(i,a)\right|=\left|(1-\alpha_0(i,a))Q_0(i,a)+\alpha_0(i,a)\left(c(i,a,j)+\beta \max_{b \in A}Q_0(j,b)\\ +\beta \theta_0 \left(c(j,d,k)+\beta \max_{e \in A}Q_0(k,e) \right) \right) \right|\\
		\leq (1-\alpha_0)\|Q_{0}\| \\+ \alpha_0 \left(C_{\max} + \beta \|Q_{0}\| \right)+ \alpha_0 \beta |\theta_0| (C_{\max}+\beta \|Q_{0}\|)\\
		\leq (1-\alpha_0)\dfrac{C_{\max}}{1-\beta} + \alpha_0 \left(C_{\max} + \beta \dfrac{C_{\max}}{1-\beta} \right)+\\\quad  \alpha_0 \beta |\theta_0|  \left(C_{\max}+\beta \dfrac{C_{\max}}{1-\beta}\right)\\
		\leq (1-\alpha_0)\dfrac{C_{\max}}{1-\beta} + \alpha_0 \left(C_{\max} + \beta \dfrac{C_{\max}}{1-\beta} \right)+ \\ \beta |\theta_0|\left(C_{\max}+\beta \dfrac{C_{\max}}{1-\beta}\right)\\
		\leq\dfrac{C_{\max}}{1-\beta}(1+\beta |\theta_0|).
	\end{dmath*}
	Consequently, $\max_{(i,a)} |Q_1(i,a)| = \|Q_1\| \leq\dfrac{C_{\max}}{1-\beta}(1+\beta |\theta_0|)\leq M$.
	Now assume that  $\|Q_k\| \leq \dfrac{C_{\max}}{1-\beta}(1+\beta |\theta_0| ) \prod_{i=1}^{k-1} (1+\alpha_i|\theta_i| \beta^2)$ is true for $k=1,2,...,n$. Then we will show that $\|Q_{n+1}\| \leq \dfrac{C_{\max}}{1-\beta}(1+\beta |\theta_0| ) \prod_{i=1}^{n} (1+\alpha_i|\theta_i| \beta^2)$.
	Let $L =  \dfrac{C_{\max}}{1-\beta}(1+\beta |\theta_0| ) \prod_{i=1}^{n-1} (1+\alpha_i|\theta_i| \beta^2)$. \\ Consider
	\begin{align*}
		&\bigg|Q_{n+1}(i,a)\bigg|=\bigg|(1-\alpha_n(i,a))Q_n(i,a)+\alpha_n(i,a)\bigg(c(i,a,j)\\&\hspace{0.5cm}+\beta \max_{b \in A}Q_n(j,b)+\beta \theta_n \bigg(c(j,d,k)+\beta \max_{e \in A}Q_n(k,e) \bigg) \bigg) \bigg|\\
		&\leq (1-\alpha_n)\|Q_{n}\| + \alpha_n \bigg(C_{\max} + \beta \|Q_{n}\| \bigg)\\&\hspace*{1cm}+ \alpha_n \beta |\theta_n| (C_{\max}+\beta \|Q_{n}\|)\\
		&\leq (1-\alpha_n)L + \alpha_n \bigg(C_{\max} + \beta L \bigg)+ \alpha_n \beta |\theta_n|  \bigg(C_{\max}+\beta L\bigg)\\
		&= L (1+\alpha_n\beta^2|\theta_n|)-\alpha_n L+\alpha_n C_{\max} + \beta \alpha_n L + \alpha_n \beta |\theta_n|C_{\max}.
	\end{align*}

	Further, we show that, $-\alpha_n L+\alpha_n C_{\max} + \beta \alpha_n L + \alpha_n \beta |\theta_n|C_{\max} \leq 0.$ \\Let $Z_n = -\alpha_n L+\alpha_n C_{\max} + \beta \alpha_n L + \alpha_n \beta |\theta_n|C_{\max}$. Then
	\begin{dmath*}
		Z_n
		= \alpha_n \left( (\beta-1)L+ C_{\max} +  \beta |\theta_n|C_{\max}\right)\\
		=\alpha_n \left( -C_{\max}(1+\beta |\theta_0| ) \prod_{i=1}^{n-1} (1+\alpha_i|\theta_i| \beta^2)+ C_{\max} +  \beta |\theta_n|C_{\max}\right)\\
		\leq \alpha_n \left( -C_{\max}(1+\beta |\theta_0| )+ C_{\max} +  \beta |\theta_n|C_{\max}\right)\\
		\leq 0 \quad(\text{Since},\, |\theta_0|\geq|\theta_n|).
	\end{dmath*}
	Therefore, $\left|Q_{n+1}(i,a)\right| \leq L \,(1+\alpha_n\beta^2|\theta_n|)$. Consequently, $\max_{(i,a)} |Q_{n+1}(i,a)| = \|Q_{n+1}\| \leq \dfrac{C_{\max}}{1-\beta}(1+\beta |\theta_0| ) \prod_{i=1}^{n} (1+\alpha_i|\theta_i| \beta^2)\leq M$. Hence the result follows.
\end{proof}

\begin{theorem}\label{thm2}
	Consider an MDP defined as in Section 2, with a policy $\mu$ capable of visiting every state-action pairs infinitely often. Let $\{i,a,j,c(i,a,j),d,k,c(j,d,k)\}$ be a sample at $n^{th}$ iteration, then the iterative procedure defined as
	\begin{align*}
	&Q_{n+1}(i,a)=(1-\alpha_n(i,a))Q_n(i,a)+\alpha_n(i,a)(c(i,a,j)\\&\hspace{0.5cm}+\beta \max_{b \in A}Q_n(j,b)+\beta \theta_n (c(j,d,k)+\beta \max_{e \in A}Q_n(k,e) ) )
	\end{align*}
	converges to the optimal $Q^*$ $w.p.1$, where $\|Q_0\|\leq \dfrac{C_{\max}}{1-\beta}$, $0\leq \alpha_n(i,a) \leq 1$ $\forall n\in \mathbb{N}$, and $\sum_{n}\alpha_n(i,a)=\infty,\; \sum_{n}\alpha^2_n(i,a)<\infty$. 
\end{theorem}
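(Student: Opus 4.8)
The plan is to invoke Lemma \ref{fl} with index set $X = S \times A$ and error process $\Psi_n(i,a) := Q_n(i,a) - Q^*(i,a)$, where $Q^*$ is the unique fixed point of $H$. Subtracting $Q^*(i,a)$ from both sides of the TSQL update casts it in the recursive form required by Lemma \ref{fl}, with
\begin{align*}
F_n(i,a) = c(i,a,j) + \beta \max_{b \in A} Q_n(j,b) + \beta \theta_n\!\left(c(j,d,k) + \beta \max_{e \in A} Q_n(k,e)\right) - Q^*(i,a),
\end{align*}
so that $\Psi_{n+1}(i,a) = (1-\alpha_n(i,a))\Psi_n(i,a) + \alpha_n(i,a) F_n(i,a)$. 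Conditions (1) and (2) of Lemma \ref{fl} are immediate: $S \times A$ is finite by hypothesis, and the stated assumptions on $\alpha_n(i,a)$ are exactly those the lemma demands.

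The heart of the argument is condition (3). Letting $\mathcal{F}_n$ be the natural filtration for which $Q_n$ is $\mathcal{F}_n$-measurable while the step-$n$ sample is drawn afresh, I would compute $E[F_n(i,a)\mid \mathcal{F}_n]$ by integrating out the randomness, where $j \sim p(\cdot\mid i,a)$, the auxiliary action $d$ is drawn at $j$ according to the behaviour policy $\mu$, and $k \sim p(\cdot\mid j,d)$. The single-step term yields precisely $(HQ_n)(i,a)$, and after summing over $k$ the two-step term collapses to $\sum_j p(j\mid i,a)\sum_d \mu(d\mid j)(HQ_n)(j,d)$. Since $Q^* = HQ^*$, the max-norm contraction property of $H$ gives $|(HQ_n)(i,a) - Q^*(i,a)| \le \beta\|\Psi_n\|$, whence
\begin{align*}
\left\|E[F_n \mid \mathcal{F}_n]\right\| \le \beta\|\Psi_n\| + \beta|\theta_n|\left(C_{\max} + \beta M\right),
\end{align*}
the bound $C_{\max} + \beta M$ on the extra contribution coming from $\|Q_n\| \le M$ (Lemma \ref{lemma3}) together with the uniform reward bound. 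Taking $\kappa = \beta \in [0,1)$ and $\zeta_n = \beta|\theta_n|(C_{\max}+\beta M)$, condition (3) holds because $|\theta_n| \to 0$ under Assumption 1, forcing $\zeta_n \to 0$.

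I expect this conditional-expectation reduction to be the main obstacle: the two-step term must be collapsed correctly while carefully tracking that the auxiliary action is generated by the behaviour policy $\mu$, and it is exactly the decay $|\theta_n|\to 0$ that allows the off-policy two-step contribution to be absorbed into the vanishing $\zeta_n$ rather than perturbing the limit away from $Q^*$. Condition (4) is then routine. Because $\|Q_n\| \le M$ for every $n$ and $\|Q^*\| \le \tfrac{C_{\max}}{1-\beta} \le M$, the random variable $F_n(i,a)$ is uniformly bounded, say $|F_n(i,a)| \le B$ with $B = (C_{\max}+\beta M)(1+\beta) + M$ (using $|\theta_n|\le 1$), so $\mathrm{Var}[F_n(i,a)\mid\mathcal{F}_n] \le B^2 \le B^2(1+\|\Psi_n\|)^2$, giving condition (4) with $K = B^2$. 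With all four hypotheses verified, Lemma \ref{fl} yields $\Psi_n \to 0$ w.p.1, that is, $Q_n \to Q^*$ almost surely.
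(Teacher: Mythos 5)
Your proposal is correct and follows essentially the same route as the paper's proof: the same invocation of Lemma \ref{fl} with $\Psi_n = Q_n - Q^*$ on $X = S\times A$, and the same treatment of condition (3), splitting $E[F_n\mid\mathcal{F}_n]$ into the one-step term handled by the contraction of $H$ (giving $\kappa=\beta$) and the two-step term absorbed into $\zeta_n = \beta|\theta_n|(C_{\max}+\beta\|Q_n\|) \to 0$ via the boundedness Lemma \ref{lemma3} and Assumption 1. The only cosmetic deviation is condition (4): you use the uniform bound $\|Q_n\|\le M$ to bound the conditional variance by a constant $B^2$, whereas the paper expands the square and controls it as $K(1+\|\Psi_n\|)^2$ via $\|Q_n\|\le\|\Psi_n\|+\|Q^*\|$ without needing $M$ there --- both are valid verifications of the same hypothesis.
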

\begin{proof}
	Let $X=S\times A$. Define $\Psi_n(s,a)=Q_n(s,a)-Q^*(s,a), \forall (s,a) \in S\times A$. For our analyses of this theorem without loss of generality assume that the behaviour policy generate the following trajectory :
	$$s_0,a_0,c'_0,s'_0,a'_0,c_1,s_1,a_1,c'_1,s'_1,a'_1,c_2,s_2,a_2,s'_2,a'_2,c'_2\cdots$$
	Incorporating the above notation and rewriting the update rule we have
	\begin{align*}
	& Q_{n+1}(s_n,a_n)=(1-\alpha_n(s_n,a_n))Q_{n}(s_n,a_n)\\
	& \hspace*{0.6cm}+\alpha_n(s_n,a_n)(c'_n+\beta\, \max_{b \in A}Q_n(s'_{n},b)\\
	&\hspace*{1cm}+\beta \,\theta_n (c_{n+1}+\beta \,\max_{e \in A}Q_n(s_{n+1},e))),
	\end{align*}
	where $c'_n=c(s_n,a_n,s'_{n})$, and $c_{n+1}=c(s'_{n},a'_{n},s_{n+1})$ .
	Hence,
	\begin{align*}
	&	\Psi_{n+1}(s_n,a_n)=(1-\alpha_n(s_n,a_n))\Psi_{n}(s_n,a_n)\\
	& \hspace{0.6cm}+\alpha_n(s_n,a_n)(c'_n+\beta\, \max_{b \in A}Q_n(s'_{n},b)\\
	& \hspace{1cm}+\beta \,\theta_n (c_{n+1}+\beta \,\max_{e \in A}Q_n(s_{n+1},e) )-Q^*(s_n,a_n)).
	\end{align*}
	Let
	\begin{align*}
	& F_n(s_n,a_n) =  c(s_n,a_n,s'_{n})+\beta\, \max_{b \in A}Q_n(s'_{n},b)\\
	&+\beta \,\theta_n (c(s'_{n},a'_{n},s_{n+1})+\beta \,\max_{e \in A}Q_n(s_{n+1},e) )-Q^*(s_n,a_n). 
	\end{align*}
	Since the updating of a single iterate of two-step Q-learning includes simulation of  extra sample compared to Q-learning, define an increasing sequence of sigma fields $\mathcal{F}_n$ as follows, for $n=0$, let $\mathcal{F}_0= \sigma(\{Q_0,s_0,a_0,\alpha_0\})$ and for $n\geq 1$,  $\mathcal{F}_n=\sigma(\{Q_0,s_0,a_0,\alpha_0, c'_{j-1},s'_{j-1},a'_{j-1}, c_j,\alpha_j,s_j,a_j : 1 \leq j \leq n \})$.  
	With this choice of $\mathcal{F}_n$, $\alpha_0$ and $\Psi_0$ will be $\mathcal{F}_0$ measurable, and $\alpha_n$, $\Psi_n$, and $F_{n-1}$ are $\mathcal{F}_n$ measurable. Now we have
	\begin{align*}
	&\left|E[F_n(s_n,a_n)|\mathcal{F}_n]\right|\\
	&=\bigg{|}E\Bigg[ c(s_n,a_n,s'_{n})+\beta\, \max_{b \in A}Q_n(s'_{n},b)+\beta \,\theta_n \Bigg(c(s'_{n},a'_{n},s_{n+1})\\&\hspace{1cm}+\beta \,\max_{e \in A}Q_n(s_{n+1},e) \Bigg)-Q^*(s_n,a_n)|\mathcal{F}_n\Bigg]\bigg{|}\\
	&=\left|E\left[c(s_n,a_n,s'_n)+\beta\, \max_{b \in A}Q_n(s'_n,b)-Q^*(s_n,a_n)|\mathcal{F}_n\right]\right.\\&\hspace*{0.6cm}\left.+\beta \theta_n E\left[c(s'_{n},a'_{n},s_{n+1})+\beta \,\max_{e \in A}Q_n(s_{n+1},e)|\mathcal{F}_n\right]\right|\\
	&\leq\left|E\left[c(s_n,a_n,s'_n)+\beta\, \max_{b \in A}Q_n(s'_n,b)-Q^*(s_n,a_n)|\mathcal{F}_n\right]\right|\\&\hspace*{0.6cm}+\beta |\theta_n| \left|E\left[\left(c(s'_{n},a'_{n},s_{n+1})+\beta \,\max_{e \in A}Q_n(s_{n+1},e) \right)|\mathcal{F}_n\right]\right|\\
	&=\left|\sum_{j=1}^{|S|}p(j|s_n,a_n)\left(c(s_n,a_n,j)+\beta \max_{b \in A}Q_n(j,b)-Q^*(s_n,a_n)\right)\right|\\&\hspace*{0.6cm}+\beta |\theta_n| \left|E\left[\left(c(s'_{n},a'_{n},s_{n+1})+\beta \,\max_{e \in A}Q_n(s_{n+1},e) \right)|\mathcal{F}_n\right]\right|\\
	&\leq \left| HQ_n(s_n,a_n)-HQ^*(s_n,a_n)\right|+ \beta\, |\theta_n| \left(C_{\max} + \beta \|Q_n\|\right)\\
	& \leq \beta \, \|\Psi_{n}\| + \zeta_n,
	\end{align*}
	\noindent
	where $\zeta_n=\beta\, |\theta_n| (C_{\max} + \beta \|Q_n\|)$. Since $\|Q_n\|$ is bounded by $M$, and $|\theta_n|$ converges to zero, $\zeta_n$ converges to zero as $n \rightarrow \infty$.
	Therefore, condition $(3)$ of Lemma \ref{fl} holds. \\
	Now consider
	\begin{align*}	
	&Var[F_n(s_n,a_n)|\mathcal{F}_n]\\
	&=E\left[\left(F_n(s_n,a_n)-E[F_n(s_n,a_n)|\mathcal{F}_n]\right)^2 |\mathcal{F}_n\right]\\
	&\leq E[(c(s_n,a_n,s'_{n})+\beta \max_{b \in A}Q_n(s'_{n},b)\\&\hspace*{1cm}+\beta \theta_n (c(s'_{n},a'_{n},s_{n+1})+\beta \max_{e \in A}Q_n(s_{n+1},e) ))^2|\mathcal{F}_n]\\
	& \leq \left(C_{\max}+ \beta \|Q_n\|+ \beta |\theta_n| \left(C_{\max}+ \beta \|Q_n\|\right)\right)^2\\
	& \leq  3\left(C_{\max}^2+ \beta^2 \|Q_n\|^2+ \beta^2 (C_{\max}+ \beta \|Q_n\|)^2\right)\\
	& \leq  3\left(C_{\max}^2+ \beta^2 \|Q_n\|^2+ 2\beta^2  (C^2_{\max}+ \beta^2 \|Q_n\|^2)\right)\\
	& =  3C_{\max}^2+ 6\beta^2  C^2_{\max}+ (3 \beta^2 + 6  \beta^4) \|Q_n\|^2\\
	& \leq  3C_{\max}^2+ 6\beta^2  C^2_{\max} + 2 (3 \beta^2 + 6  \beta^4) (\|\Psi_n\|^2 + \|Q^*\|^2)\\
	&= 3C_{\max}^2+ 6\beta^2  C^2_{\max} + 2 (3 \beta^2 + 6  \beta^4) \|Q^*\|^2 + 2 (3 \beta^2 + 6  \beta^4) \|\Psi_n\|^2\\
	& \leq K(1 + \|\Psi_n\|^2 ) \leq K(1 + \|\Psi_n\| )^2,
	\end{align*}
	where $K = \max\{3C_{\max}^2+ 6\beta^2  C^2_{\max}+2 (3 \beta^2 + 6  \beta^4) \|Q^*\|^2  ,\, 2 (3 \beta^2 + 6  \beta^4)\}$. For the choice $X=S\times A$, $\kappa = \beta$ all the hypotheses of Lemma \ref{fl} is satisfied. Consequently, $\Psi_n$ converges to zero $w.p.1$. Hence, $Q_n$ converges to $Q^*$ $w.p.1.$
	
\end{proof}
Now we will outline the convergence analysis of smooth two-step Q-learning (S-TSQL). First, the following lemma provides the boundedness of Q-iterates obtained via S-TSQL.
\begin{lemma}Let $Q_n(i,a)$ be the value corresponding to a state-action pair $(i,a)$ at $n^{th}$ iteration of S-TSQL. Suppose $\|Q_0\| \leq \dfrac{C_{\max}}{1-\beta}$. Then $\|Q_n\| \leq D,\, \forall n \in \mathbb{N},\, \text{where} \;D =\left(\dfrac{C_{\max}}{1-\beta}+\dfrac{\log|A|}{N(1-\beta)}\right)(1+\beta |\theta_0| ) \prod_{i=1}^{\infty} (1+\alpha_i|\theta_i| \beta^2)$.
\end{lemma}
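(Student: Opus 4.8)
The plan is to follow the induction argument of Lemma~\ref{lemma3} almost verbatim, the only structural change being that the maximum operator is replaced by the log-sum-exp function. The single new ingredient I need is a uniform bound on the LSE term. From the elementary sandwich inequality
\begin{equation*}
\max_{b\in A} Q(j,b) \;\leq\; \frac{1}{N}\log\sum_{b=1}^{|A|} e^{NQ(j,b)} \;\leq\; \max_{b\in A} Q(j,b) + \frac{\log|A|}{N},
\end{equation*}
which holds because the sum lies between its largest summand and $|A|$ times it, I would deduce that $\left| \frac{1}{N}\log\sum_{b=1}^{|A|} e^{NQ(j,b)}\right| \leq \|Q\| + \frac{\log|A|}{N}$. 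Writing $B := \frac{C_{\max}}{1-\beta} + \frac{\log|A|}{N(1-\beta)}$, so that $D = B\,(1+\beta|\theta_0|)\prod_{i=1}^{\infty}(1+\alpha_i|\theta_i|\beta^2)$, the role previously played by $\frac{C_{\max}}{1-\beta}$ in Lemma~\ref{lemma3} is now played by $B$.

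For the base case I would bound $|Q_1(i,a)|$ using the update rule~\eqref{4.5} together with the LSE estimate above: the triangle inequality splits the right-hand side into the $(1-\alpha_0)\|Q_0\|$ term plus two driving terms, each of the form $C_{\max} + \beta(\|Q_0\| + \frac{\log|A|}{N})$. The key algebraic identity is that $(1-\beta)B = C_{\max} + \frac{\log|A|}{N}$, equivalently $C_{\max} + \beta B + \frac{\beta\log|A|}{N} = B - \frac{(1-\beta)\log|A|}{N} \leq B$, so that whenever $\|Q\| \leq B$ the driving term is itself bounded by $B$. Using $\|Q_0\| \leq \frac{C_{\max}}{1-\beta} \leq B$ and $\alpha_0 \leq 1$ this yields $\|Q_1\| \leq B(1+\beta|\theta_0|) \leq D$. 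For the inductive step I would set $L := B(1+\beta|\theta_0|)\prod_{i=1}^{n-1}(1+\alpha_i|\theta_i|\beta^2)$ as the assumed bound on $\|Q_n\|$, expand $|Q_{n+1}(i,a)|$ in the same way, and isolate the factor $L(1+\alpha_n\beta^2|\theta_n|)$, leaving a remainder
\begin{equation*}
Z_n = \alpha_n\Big[(\beta-1)L + C_{\max} + \tfrac{\beta\log|A|}{N} + \beta|\theta_n|C_{\max} + \tfrac{\beta^2|\theta_n|\log|A|}{N}\Big]
\end{equation*}
that I must show is non-positive.

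The main obstacle is precisely the sign of $Z_n$, which is more delicate than in Lemma~\ref{lemma3} because of the extra $\frac{\log|A|}{N}$ contributions. Substituting $(\beta-1)L = -(C_{\max}+\frac{\log|A|}{N})(1+\beta|\theta_0|)\prod_{i=1}^{n-1}(1+\alpha_i|\theta_i|\beta^2)$ and bounding the product below by $1$, the bracket splits into a $C_{\max}$-part with coefficient $\beta(|\theta_n|-|\theta_0|) \leq 0$ and a $\frac{\log|A|}{N}$-part with coefficient $(\beta - 1) + (\beta^2|\theta_n| - \beta|\theta_0|)$. Here I would use $|\theta_n| \leq |\theta_0| \leq 1$ and $\beta < 1$ to note that $\beta^2|\theta_n| \leq \beta|\theta_n| \leq \beta|\theta_0|$, so the second coefficient is at most $\beta - 1 < 0$; both parts are non-positive, hence $Z_n \leq 0$. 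This gives $\|Q_{n+1}\| \leq L(1+\alpha_n\beta^2|\theta_n|) = B(1+\beta|\theta_0|)\prod_{i=1}^{n}(1+\alpha_i|\theta_i|\beta^2)$. Finally, since Assumption~1 gives $\sum_n \alpha_n|\theta_n| < \infty$, Lemma~\ref{k1} ensures the infinite product $\prod_{i=1}^{\infty}(1+\alpha_i|\theta_i|\beta^2)$ converges; as every factor exceeds $1$ the partial products are dominated by this limit, so $\|Q_{n+1}\| \leq D$, completing the induction.
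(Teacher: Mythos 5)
Your proof is correct and follows essentially the same route as the paper's: induction with the log-sum-exp estimate $\left|\tfrac{1}{N}\log\sum_{b}e^{NQ(j,b)}\right|\leq \|Q\|+\tfrac{\log|A|}{N}$, the same product-form bound $L(1+\alpha_n\beta^2|\theta_n|)$, and the same residual term $Z_n$ whose non-positivity closes the induction. If anything, your write-up is more complete than the paper's, since you explicitly verify $Z_n\leq 0$ with the extra $\tfrac{\log|A|}{N}$ contributions (splitting into a $C_{\max}$-part and a $\tfrac{\log|A|}{N}$-part, each with non-positive coefficient), a step the paper only defers to ``similar arguments as in the proof of Lemma \ref{lemma3}.''
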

\begin{proof}
	We obtain the proof by applying induction on $n$. Let $(i,a) \in S \times A$
		\begin{align*}
		&|Q_{1}(i,a)|=\bigg|(1-\alpha_0(i,a))Q_0(i,a)+\alpha_0(i,a)\bigg(c(i,a,j)\\&\hspace{0.6cm}+\dfrac{\beta}{N}\log \sum_{b=1}^{|A|}e^{NQ_0(j,b)}+\beta \theta_0 \bigg(c(j,d,k)\\&\hspace*{1.4cm}+\dfrac{\beta}{N}\log \sum_{e=1}^{|A|}e^{NQ_0(k,e)} \bigg) \bigg)\bigg|
	\end{align*}

	\begin{align*}\label{4.6}
	&\leq (1-\alpha_0)\|Q_{0}\| + \alpha_0 \left(C_{\max} + \beta \left|\dfrac{1}{N}\log \sum_{b=1}^{|A|}e^{NQ_0(j,b)}\right| \right)\\
	&\hspace*{0.6cm}+ \alpha_0 \beta |\theta_0| \left(C_{\max}+\beta \left|\dfrac{1}{N}\log \sum_{e=1}^{|A|}e^{NQ_0(k,e)}\right|\right).
	\end{align*}
	Note that, 
	\begin{align*}
	\left|\dfrac{1}{N}\log \sum_{b=1}^{|A|}e^{NQ_0(j,b)}\right|\leq \dfrac{1}{N}\log \sum_{b=1}^{|A|}e^{N\|Q_0\|}= \|Q_0\|+\dfrac{\log|A|}{N}.
	\end{align*}
	Hence,
	\begin{align*}
	&|Q_{1}(i,a)|\leq (1-\alpha_0)\|Q_0\|+ \alpha_0 \left(C_{\max} + \beta\left(\|Q_0\|+\dfrac{\log|A|}{N}\right)\right)\\&\hspace{1.5cm}+ \alpha_0 \beta |\theta_0|  \left(C_{\max}+\beta \left(\|Q_0\|+\dfrac{\log|A|}{N}\right)\right)\\
	&\leq (1-\alpha_0)\dfrac{C_{\max}}{1-\beta}+(1-\alpha_0)\dfrac{\log|A|}{N} \\&\hspace{0.5cm}+ \alpha_0 \left(C_{\max} + \beta\left(\dfrac{C_{\max}}{1-\beta}+\dfrac{\log|A|}{N}\right)\right)\\&\hspace{1.5cm}+ \alpha_0 \beta |\theta_0|  \left(C_{\max}+\beta \left(\dfrac{C_{\max}}{1-\beta}+\dfrac{\log|A|}{N}\right)\right)\\
	&=(1-\alpha_0)\dfrac{C_{\max}}{1-\beta} + \alpha_0 C_{\max} + \alpha_0\beta \dfrac{C_{\max}}{1-\beta} \hspace{1cm}\\&+\alpha_0 \beta |\theta_0| \left(C_{\max}+\beta \dfrac{C_{\max}}{1-\beta}\right)\\&\hspace{0.4cm}+(1-\alpha_0)\dfrac{\log|A|}{N}+\alpha_0\beta \dfrac{\log|A|}{N}+\alpha_0\beta^2|\theta_0|\dfrac{\log|A|}{N}\\
	&\leq \left(\dfrac{C_{\max}}{1-\beta}+\dfrac{\log|A|}{N}\right)(1+\beta |\theta_0| )\\&\leq\left(\dfrac{C_{\max}}{1-\beta}+\dfrac{\log|A|}{N(1-\beta)}\right)(1+\beta |\theta_0| ) .
	\end{align*}
	Consequently, $\max_{(i,a)} |Q_1(i,a)| = \|Q_1\| \leq \left(\dfrac{C_{\max}}{1-\beta}+\dfrac{\log|A|}{N(1-\beta)}\right)(1+\beta |\theta_0| ) \leq D$.
	Assume that  $\|Q_k\| \leq \left(\dfrac{C_{\max}}{1-\beta}+\dfrac{\log|A|}{N(1-\beta)}\right)(1+\beta |\theta_0| ) \prod_{i=1}^{k-1} (1+\alpha_i|\theta_i| \beta^2)$  is true for $k=1,2,...,n$. Then we will show that $\|Q_{n+1}\| \leq \left(\dfrac{C_{\max}}{1-\beta}+\dfrac{\log|A|}{N(1-\beta
		)}\right)(1+\beta |\theta_0| ) \prod_{i=1}^{n} (1+\alpha_i|\theta_i| \beta^2)$.
	Let $L =   \left(\dfrac{C_{\max}}{1-\beta}+\dfrac{\log|A|}{N(1-\beta
		)}\right)(1+\beta |\theta_0| ) \prod_{i=1}^{n-1} (1+\alpha_i|\theta_i| \beta^2)$. Now,\\
	\begin{dmath*}
		\left|Q_{n+1}(i,a)\right|\leq (1-\alpha_n)\|Q_{n}\| + \alpha_n \left(C_{\max} + \beta \left(\|Q_{n}\|+\dfrac{\log|A|}{N}\right) \right)\\+ \alpha_n \beta |\theta_n| \left(C_{\max}+\beta \left(\|Q_{n}\|+\dfrac{\log|A|}{N}\right)\right)\\
		\leq (1-\alpha_n)L + \alpha_n \left(C_{\max} + \beta L +\dfrac{\beta\log|A|}{N} \\+ \beta |\theta_n|  C_{\max}+\beta^2 |\theta_n| L+ \beta^2 |\theta_n|\dfrac{\log|A|}{N}\right)\\
		= L \,(1+\alpha_n\beta^2|\theta_n|)-\alpha_n L\\+\alpha_n \left(C_{\max} + \beta L +\dfrac{\beta\log|A|}{N} + \beta |\theta_n|  C_{\max}+ \beta^2 |\theta_n|\dfrac{\log|A|}{N}\right).
	\end{dmath*}
	Using similar arguments as that of the proof of Lemma \ref{lemma3}, one can show that $Z_n \leq 0,$ where  $Z_n =-\alpha_n L+\alpha_n \left(C_{\max} + \beta L +\dfrac{\beta\log|A|}{N} + \beta |\theta_n|  C_{\max}+ \beta^2 |\theta_n|\dfrac{\log|A|}{N}\right)$. 
	Hence, $\left|Q_{n+1}(i,a)\right| \leq  L \,(1+\alpha_n\beta^2|\theta_n|)$. Consequently, $\|Q_{n+1}\| \leq \left(\dfrac{C_{\max}}{1-\beta}+\dfrac{\log|A|}{N(1-\beta
		)}\right)(1+\beta |\theta_0| ) \prod_{i=1}^{n} (1+\alpha_i|\theta_i| \beta^2)\leq D$. Hence the result follows.
\end{proof}

\begin{theorem}\label{thm3}
	Consider an MDP defined as in Section 2, with a policy $\mu$ capable of visiting every state-action pairs infinitely often. Let $\{i,a,j,c(i,a,j),d,k,c(j,d,k)\}$ be a sample at the $n^{th}$ iteration, then the iterative procedure defined as
	\begin{align*}
	&Q_{n+1}(i,a)=(1-\alpha_n(i,a))Q_n(i,a)+\alpha_n(i,a)\\
	&\hspace*{0.6cm}\left(c(i,a,j)+ \dfrac{\beta}{N}\log \sum_{b=1}^{|A|}e^{NQ_n(j,b)}\right)\\
	&\hspace*{1cm}+\beta \theta_n\alpha_n(i,a) \left(c(j,d,k)+ \dfrac{\beta}{N}\log \sum_{e=1}^{|A|}e^{NQ_n(k,e)}\right)
	\end{align*}
	converges to the fixed point of $U$, where $\|Q_0\|\leq\dfrac{C_{\max}}{1-\beta}$, $0\leq \alpha_n(i,a) \leq 1$ $\forall n\in \mathbb{N}$ , and $\sum_{n}\alpha_n(i,a)=\infty,\; \sum_{n}\alpha^2_n(i,a)<\infty$. 
\end{theorem}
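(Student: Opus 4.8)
The plan is to follow the proof of Theorem~\ref{thm2} verbatim in structure, replacing the max-norm contraction $H$ by the smooth Q-Bellman operator $U$ and the target $Q^{*}$ by $\hat{Q}$, the fixed point of $U$. First I would take $X = S\times A$, define $\Psi_n(s,a) = Q_n(s,a) - \hat{Q}(s,a)$, and work along the same trajectory and with the same increasing family of $\sigma$-fields $\mathcal{F}_n$ introduced in Theorem~\ref{thm2}. Writing $c'_n = c(s_n,a_n,s'_n)$ and $c_{n+1} = c(s'_n,a'_n,s_{n+1})$, I would cast the S-TSQL update into the recursive form demanded by Lemma~\ref{fl} by setting
\begin{align*}
F_n(s_n,a_n) &= c'_n + \frac{\beta}{N}\log\sum_{b=1}^{|A|} e^{NQ_n(s'_n,b)} \\
&\quad + \beta\theta_n\Big(c_{n+1} + \frac{\beta}{N}\log\sum_{e=1}^{|A|} e^{NQ_n(s_{n+1},e)}\Big) - \hat{Q}(s_n,a_n),
\end{align*}
so that $\Psi_{n+1}(s_n,a_n) = (1-\alpha_n(s_n,a_n))\Psi_n(s_n,a_n) + \alpha_n(s_n,a_n)F_n(s_n,a_n)$. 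Conditions (1) and (2) of Lemma~\ref{fl} then follow immediately from the finiteness of $S\times A$ and the hypotheses on $\alpha_n$.

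The decisive step is condition (3), and it is the only place where the smooth operator enters in an essential way. I would split $E[F_n\mid\mathcal{F}_n]$ into the Q-learning-like part and the $\theta_n$-weighted part. The conditional expectation of the first part is precisely $\sum_j p(j\mid s_n,a_n)\big(c(s_n,a_n,j) + \frac{\beta}{N}\log\sum_b e^{NQ_n(j,b)}\big) = (UQ_n)(s_n,a_n)$; since $\hat{Q}$ is the fixed point of $U$ we have $\hat{Q}(s_n,a_n) = (U\hat{Q})(s_n,a_n)$, so this contribution equals $(UQ_n)(s_n,a_n) - (U\hat{Q})(s_n,a_n)$. As $U$ is a max-norm contraction with factor $\beta$ (recalled in Section~2), its absolute value is at most $\beta\|Q_n - \hat{Q}\| = \beta\|\Psi_n\|$. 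For the $\theta_n$-weighted part I would invoke the elementary bound $\big|\frac{1}{N}\log\sum_b e^{NQ(j,b)}\big| \leq \|Q\| + \frac{\log|A|}{N}$ already used in the boundedness lemma, giving a bound $\beta|\theta_n|\big(C_{\max} + \beta(\|Q_n\| + \frac{\log|A|}{N})\big)$. Setting $\kappa = \beta$ and $\zeta_n = \beta|\theta_n|\big(C_{\max} + \beta(\|Q_n\| + \frac{\log|A|}{N})\big)$, condition (3) holds: $\kappa\in[0,1)$, and $\zeta_n\to 0$ w.p.1 because $|\theta_n|\to 0$ while $\|Q_n\|\leq D$ by the preceding boundedness lemma.

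For condition (4) I would bound the conditional variance by the conditional second moment of the bracketed expression in $F_n$, exactly as in Theorem~\ref{thm2}, again replacing each log-sum-exp term by $\|Q_n\| + \frac{\log|A|}{N}$. Repeated use of $(x+y)^2\leq 2x^2 + 2y^2$ together with $\|Q_n\|^2\leq 2\|\Psi_n\|^2 + 2\|\hat{Q}\|^2$ then yields an estimate of the form $K(1+\|\Psi_n\|)^2$ for a constant $K$ depending only on $C_{\max}$, $\beta$, $N$, $|A|$, and $\|\hat{Q}\|$. With all four hypotheses of Lemma~\ref{fl} in force, $\Psi_n\to 0$ w.p.1, that is, $Q_n$ converges to the fixed point of $U$ with probability one.

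I expect no serious obstacle beyond condition (3): the argument is essentially a transcription of Theorem~\ref{thm2}, and the only genuinely new ingredient is recognizing that the conditional mean of the smooth update is $UQ_n$ and exploiting the $\beta$-contractivity of $U$ about its own fixed point $\hat{Q}$. The additional $\theta_n$ term remains harmless precisely because $|\theta_n|\to 0$ and the iterates stay bounded by $D$, which is exactly what makes $\zeta_n$ vanish.
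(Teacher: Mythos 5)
Your proposal is correct and is exactly the argument the paper intends: the paper's own proof of Theorem~\ref{thm3} is omitted with the remark that it ``runs on similar lines of Theorem~\ref{thm2},'' and your transcription—replacing $Q^*$ by $\hat{Q}$, using the $\beta$-contractivity of $U$ for condition (3), and the bound $\bigl|\tfrac{1}{N}\log\sum_b e^{NQ(j,b)}\bigr|\leq \|Q\|+\tfrac{\log|A|}{N}$ together with $\|Q_n\|\leq D$ for $\zeta_n\to 0$ and the variance estimate—is precisely how that transcription goes.
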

\begin{proof}
	The proof runs on similar lines of the Theorem \ref{thm2} and hence is omitted.
\end{proof}
	\section{Experiments}
\noindent
The performance of the proposed algorithms TSQL and S-TSQL compared with the classical Q-learning (QL) and double Q-learning (D-Q) \cite{hasselt2010double}, double Q-learning with average estimator and double step-size (D-Q-Avg) \cite{wentaoweng}, SOR Q-learning (SORQL) \cite{kamanchi2019successive} algorithms by solving the three benchmark problems in this section. In all the experiments for S-TSQL, the choice of $N$ is $10000$. First the proposed algorithms are tested with an MDP discussed in \cite{wentaoweng} is known for the maximisation bias. Next the performance of the proposed algorithms are tested by solving $100$ random MDPs generated by the MDP toolbox \cite{toolbox}. Finally, the performance of the TSQL and S-TSQL are examined with the classic roulette problem discussed as a multi-armed bandit problem in \cite{8695133}. Our implementation of the experiments can be found here \footnote{\url{https://github.com/shreyassr123/Two-step-Q-learning}.}
\subsection{Maximization Bias Example} Recently, W. Weng et al. \cite{weng} discussed a maximization bias example similar to Example 6.1 in \cite{MR3889951} to demonstrate the modification of a double Q-learning algorithm. More specifically, it was observed in \cite{weng} that by doubling the step size and averaging the two estimators in double Q-learning, one can improve the performance of the classical double Q-learning. Throughout this paper, we use D-Q-Avg to denote the algorithm suggested by W. Weng et al. \cite{weng}.
In this example, there are nine states labelled as $\{0,1,2,...,8\}$, and the agent always starts in state zero. Each state has two actions: going left or going right. If the agent takes the right action in state zero, it receives zero reward, and the game ends. If it takes the left action in state zero, it transitions to any of the remaining states $\{1,2,...,8\}$ with equal probability and receives a zero reward. If the agent chooses the action right from these new states $\{1,2,...,8\}$, it returns to state zero. If it chooses the left action from these new states $\{1,2,...,8\}$, the game ends. Both actions from these new states $\{1,2,...,8\}$ lead to a reward drawn from a normal distribution with a mean of $-0.1$ and a variance of $1$. The code for demonstrating the proposed algorithms built on the code available in \cite{wentaoweng}. Throughout this example the policy is set to be epsilon-greedy with an epsilon value $0.1$. During the implementation of all algorithms, the Q-values are initialised to zero and trained over $200$ episodes. We plot the graph for the number of episodes versus the probability of choosing the left action from the zero state at the end of each episode. The probability is taken as an average of over $1000$ independent runs of the experiment. Note that, opting to go right consistently maximizes the mean reward for the agent. Therefore, a higher probability of choosing left suggests that the algorithm has learned a suboptimal policy. 

\begin{figure}[h!]
	\includegraphics[scale=0.4]{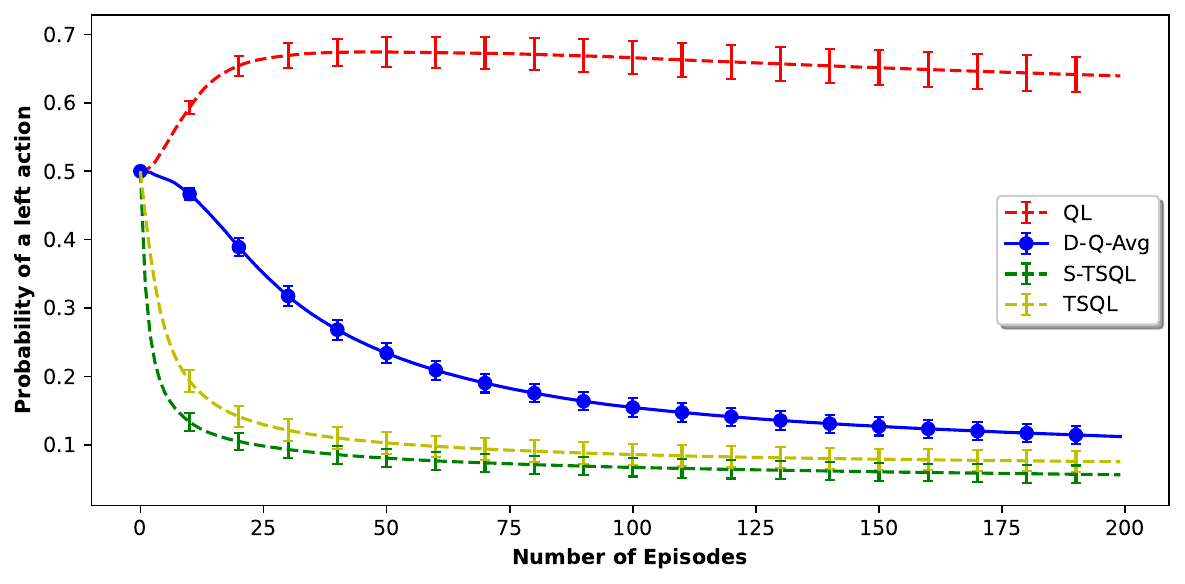}
	\centering
	\caption{Performance of QL, D-Q-Avg, S-TSQL, and TSQL with $\alpha_n=\frac{1}{n+1}$, and $\theta_n=\frac{1}{n^2+10}$.}
	\label{key1}
\end{figure}
\begin{figure}[h!]
	\includegraphics[scale=0.4]{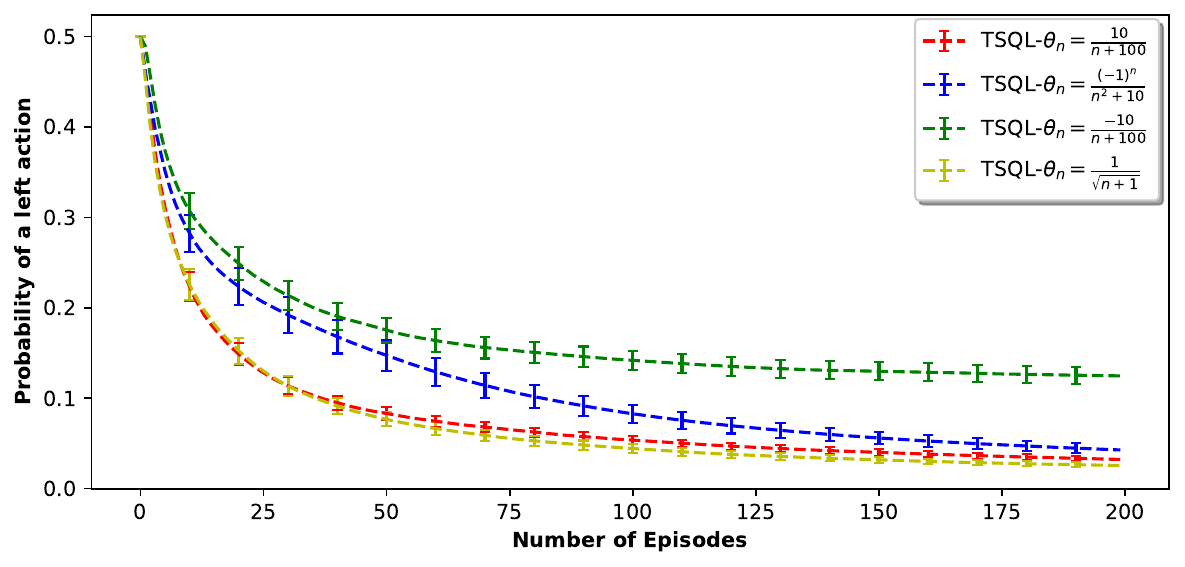}
	\centering
	\caption{Performance of TSQL with $\alpha_n=\frac{10}{n+100}$, and various choice of $\theta_n$.}
	\label{key2}
\end{figure}
\begin{figure}[h!]
	\includegraphics[scale=0.4]{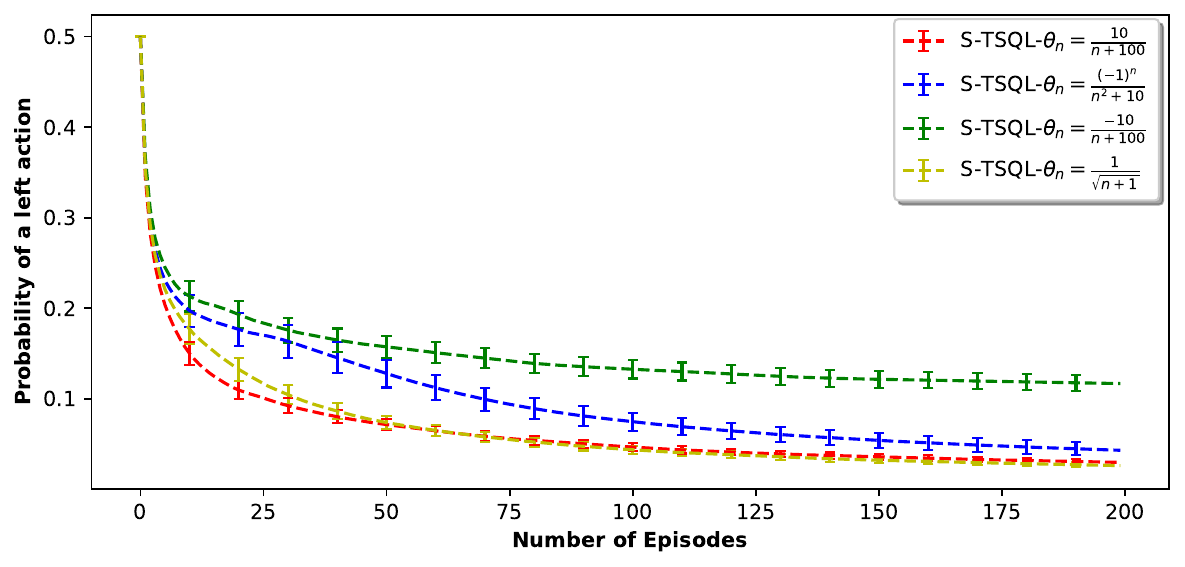}
	\centering
	\caption{Performance of S-TSQL with $\alpha_n=\frac{10}{n+100}$, and various choice of $\theta_n$.}
	\label{key3}
\end{figure}
\begin{figure}[h!]
	\includegraphics[scale=0.4]{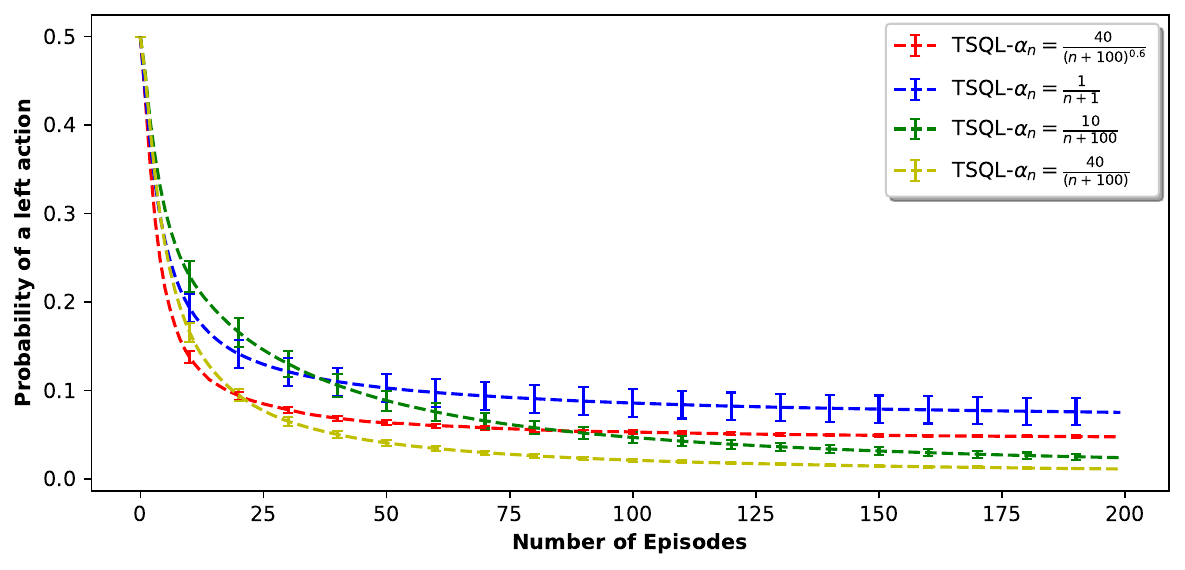}
	\centering
	\caption{Performance of TSQL with $\theta_n=\frac{1}{n^2+10}$, and various choice of $\alpha_n$.}
	\label{key4}
\end{figure}
\begin{figure}[h!]
	\includegraphics[scale=0.4]{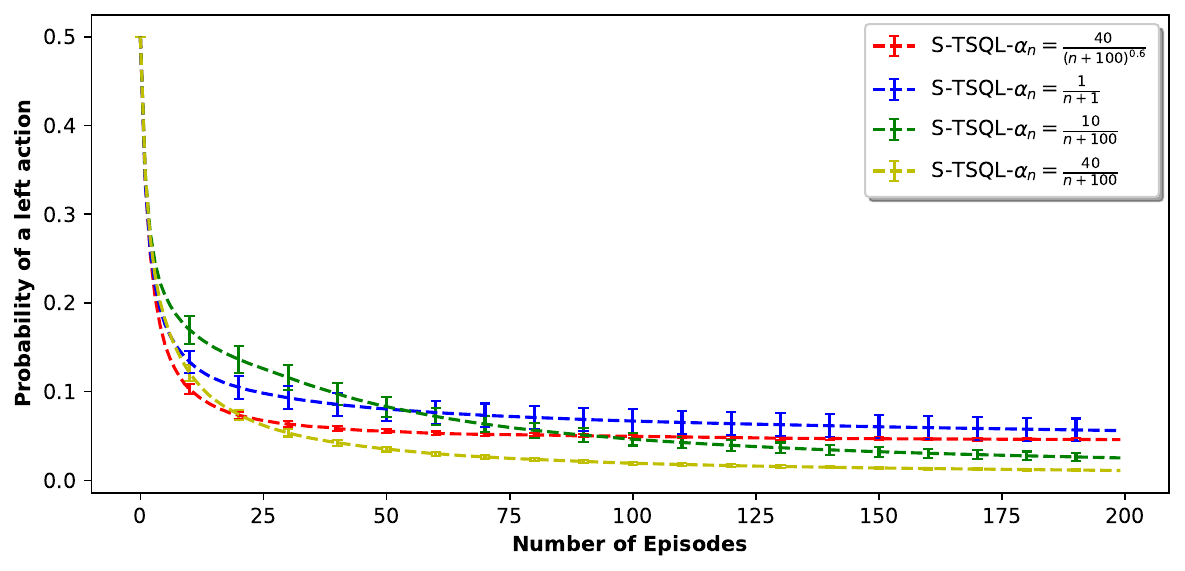}
	\centering
	\caption{Performance of S-TSQL with $\theta_n=\frac{1}{n^2+10}$, and various choice of $\alpha_n$.}
	\label{key5}
\end{figure}

%
%
%

%
%
\noindent
Figure \ref{key1}, provides the performance comparison of the algorithms QL, D-Q-Avg \cite{weng}, and the proposed algorithm TSQL and S-TSQL for the learning rate $\alpha_n=\frac{1}{n+1}$. Moreover, the parameter $\theta_n$ in the proposed algorithms TSQL and S-TSQL takes the value $\frac{1}{n^2+10}$. From Figure \ref{key1}, one can see that the performance of the classical QL algorithm deteriorate due to the maximization bias. The improved version of double Q-learning algorithm D-Q-Avg \cite{weng}, performs better than the QL algorithm. It is interesting to note that the proposed algorithms TSQL and S-TSQL performs better than the QL and D-Q-Avg \cite{weng} algorithms. In other words, the proposed algorithms TSQL and S-TSQL requires less number of episodes to learn the optimal policy. Among the proposed algorithms TSQL and S-TSQL, S-TSQL performs better than the TSQL. Figure \ref{key2}, and Figure \ref{key3} provides the performance of TSQL and S-TSQL respectively for the fixed learning rate $\alpha_n = \frac{10}{n+100}$ and various choice of $\theta_n$. Similarly, Figure \ref{key4}, and Figure \ref{key5} provides the performance of TSQL and S-TSQL respectively for the choice of $\theta_n=\frac{1}{n^2+10}$ and various choices for the learning rate.
\subsection{Random MDPs }
In this experiment, $100$ random MDPs with $10$ states and $5$ actions are generated using the MDP toolbox \cite{toolbox}. The discount factor $\beta$ is set to $0.6$. Comparison of the proposed algorithms with QL and D-Q is provided. 
The average error for these 100 MDPs are calculated as follows
\begin{equation}\label{error}
\text{Average Error}=\dfrac{1}{100}\sum_{k=1}^{100}\|J_{k}^*-\max_{b \in A}\,Q_{k}^*(.,b)\|,
\end{equation}
where $Q_k^*(.,.)$ is the Q-value at the end of $10000$ iterations obtained using the particular algorithm, and $J_k^*$ represents the optimal value function derived using standard value iteration from the toolbox. We initialize the Q-values of all the algorithms to zero. The choice of $\alpha_n$ and $\theta_n$ for this experiment is $\frac{1}{(n+2)^{0.501}}$, and $\frac{1000}{n+1000}$ respectively. In addition, epsilon-greedy policy is followed to update the Q-values with $\epsilon=0.1$.
\begin{table}[ht]
	\centering
	
	\begin{tabular}{@{}lccccccccl@{}}\toprule	
		
		\textbf{Algorithm} & \textbf{Average Error}   \\\midrule
		\textbf{QL}  & 0.499\\
		\textbf{TSQL} &  0.2331\\
		\textbf{S-TSQL} &  0.2331\\
		\textbf{D-Q}   & 1.3914   \\\bottomrule
		
	\end{tabular}	
	
	\vspace{0.1cm}
	
	\caption{Comparison of algorithms for $S=10$, $A=5$, and $\beta=0.6$ averaged over $100$ MDPs.}
	\label{tab:my-table}
	
\end{table}
\noindent
Table \ref{tab:my-table} shows that the average error of the proposed algorithms is much less than that of the Q-learning and double Q-learning. It is interesting to note that the proposed algorithms show more than fifty percentage decrease in the average error. In this example the proposed TSQL and S-TSQL algorithms show almost the same performance. One can also observe that the D-Q does not perform well in this example.\\
Lately, C. Kamanchi et al. \cite{kamanchi2019successive} proposed a modification in the QL called as SOR Q-learning (SORQL) to solve a special class of MDP. More specifically, MDPs whose transition probability $p$ satisfies $p(i|i,a)>0, \forall (i,a)\in S\times A$ can be solved using SORQL algorithm. In this experiment, once again $100$ random MDPs with this special class are generated with $10$ states and $5$ actions using the toolbox \cite{toolbox}. The choice of discount factor for all these experiments is $0.6$. At the end of $100000$ iterations, the average error for all the algorithms are calculated using Equation \ref{error}. The learning rate for all the algorithms is $\frac{1}{(n+2)^{0.501}}$. The parameter $\theta_n$ in TSQL and S-TSQL takes the value $ \frac{1000}{n+1000}$.
\begin{table}[]
	\centering
	
	\begin{tabular}{@{}lccccccccl@{}}\toprule	
		
		\textbf{Algorithm} & \textbf{Average Error}   \\\midrule
		\textbf{QL}  & 0.4033\\
		\textbf{TSQL} &  0.2945\\
		\textbf{S-TSQL} &  0.2945\\
		\textbf{D-Q}   & 0.6713   \\
		\textbf{SORQL}   & 0.3065\\\bottomrule
	\end{tabular}	
	
	\vspace{0.1cm}
	
	\caption{Comparison of algorithms for $S=10$, $A=5$ averaged over $100$ MDPs.}
	\label{tab:my-table1}
\end{table}\\
Table \ref{tab:my-table1}, provides the performance of QL, D-Q \cite{hasselt2010double}, SOR Q-learning (SORQL) \cite{kamanchi2019successive} and the proposed TSQL and S-TSQL algorithms. From, Table \ref{tab:my-table1}, one can conclude that the proposed algorithms perform better than the algorithms QL, D-Q and SORQL. It is worth mentioning that throughout this experiment, SORQL algorithm uses the optimal relaxation parameter. In general, finding the optimal relaxation factor is not practically possible but using the idea discussed in \cite{gametheory} one can get a better relaxation parameter.
\subsection{Roulette as Multi-Armed Bandit problem}
In this example, there is a single state with multiple actions, a total of $39$ possible actions that an agent can choose. This example is an elementary version of the roulette game, and the best strategy in gambling is not to gamble. Among the $39$ actions in the MDP, there is an action corresponding to not gambling, which ends the episode with zero rewards. Every other action results in a reward from a normal distribution with a mean of $-0.0526$ and a variance of one. The discount factor is set to $0.99$. This example is discussed in \cite{8695133} and it is also similar to that of the MDP discussed as a part of the loop bandit MDP in (\cite{hasselt}, Fig 4.3). By solving this MDP, the performance of the algorithms QL, D-Q and proposed algorithms TSQL and S-TSQL are compared. All the algorithms follow an exploratory policy. We run all the algorithms for one-lakh episodes and take the average of ten such independent experiments.
\begin{figure}[h]
	\includegraphics[scale=0.4]{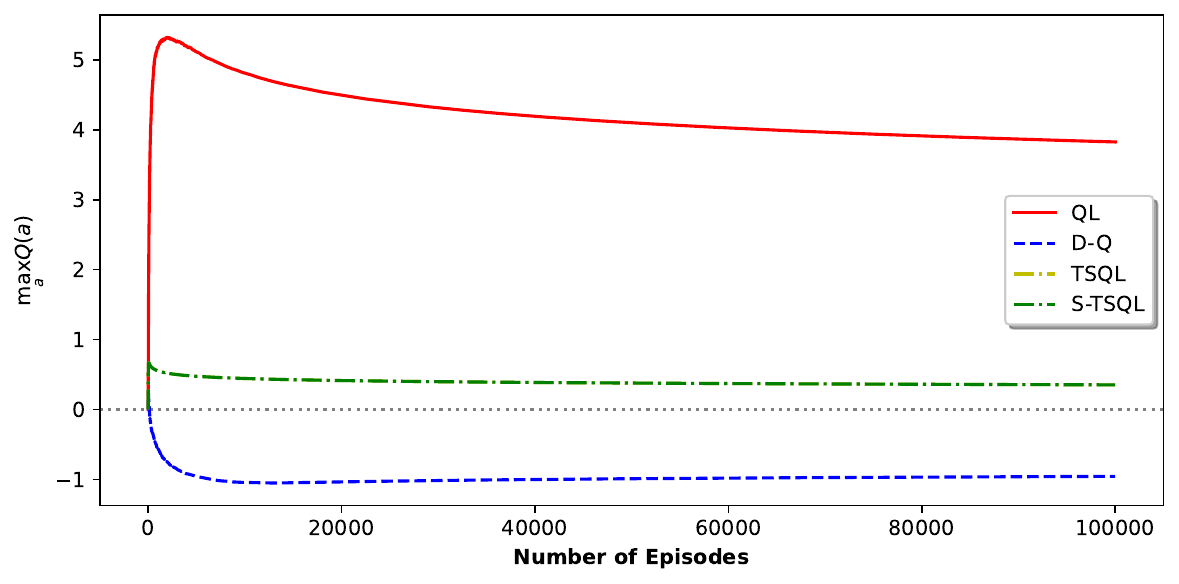} 
	\centering
	\caption{Performance of QL, D-Q, TSQL, S-TSQL with $\alpha_n=\frac{10}{n+100}$, and $\theta_n=\frac{-10^3}{n+10^3}$. }	 	\label{key8}
\end{figure}
\begin{figure}[h]
	\includegraphics[scale=0.4]{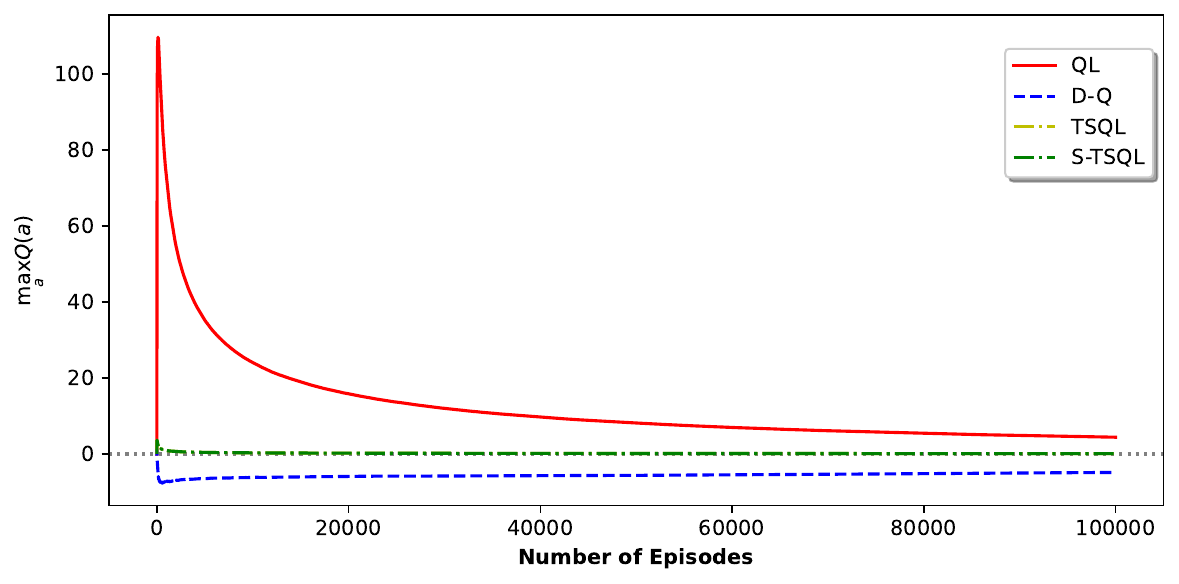} 
	\centering
	\caption{Performance of QL, D-Q, TSQL, S-TSQL with $\alpha_n=\frac{100}{n+100}$, and $\theta_n=\frac{-10^3}{\sqrt{n}+10^3}$. }
	\label{key9}
\end{figure}
Figure \ref{key8} shows the behaviour of all the algorithms for $100000$ episodes with learning rate $\alpha_n=\frac{10}{n+10}$ and the parameter $\theta_n$ in TSQL and S-TSQL, takes the value $\frac{-10^3}{n+10^3}$. In Figure \ref{key8}, the learning rate $\alpha_n$ for all the algorithms  is $\frac{10}{n+100}$ and the sequence ${\theta_n}$ for the proposed  algorithms is $\frac{-10^3}{n+10^3}$. As expected, QL overestimates the optimal Q-value. At the end of one hundred thousand episodes, the Q-value corresponding to QL is still far from the optimal value. The D-Q algorithm underestimates the optimal Q-value. At the end of one hundred thousand episodes and ten independent experiments the average value of $\max_a Q(a)$ for QL, D-Q and proposed algorithms is $3.82$, $-0.95$, and $0.352$ respectively. One can improve the performance of the proposed algorithm TSQL and S-TSQL by changing the learning rate $\alpha_n$ and the parameter $\theta_n$ in the algorithms. More specifically, Figure \ref{key9}, shows the behaviour of the algorithms QL, D-Q, TSQL, and S-TSQL after $100000$ episodes for the learning rate $\alpha_n =\frac{100}{n+100}$ and $\theta_n=\frac{-10^3}{\sqrt{n}+10^3}$. In this setting at the end of one hundred thousand episodes and ten independent experiments, the average value of $\max_a Q(a)$ is $4.41$ and $-4.85$ respectively, for the method QL and D-Q which is far away from the optimal value $0$. However, the proposed algorithms produce the value for $\max_a Q(a)$ as $0.088$ after one hundred thousand episodes and ten independent experiment. This value is very close to the optimal value zero.
\section{Conclusion}
\noindent
In this paper, novel off-policy two-step algorithms are proposed and the proposed algorithms are robust and easy to implement. The proposed two-step Q-learning algorithms are robust and easy to implement. Results regarding the boundedness of the two-step Q-learning and smooth two-step Q-learning algorithm are provided, along with the results of convergence. To demonstrate the superior performance of the proposed algorithms, several empirical tests are conducted. Specifically, the experiments are performed on the classic bias examples discussed in \cite{hasselt,8695133,weng}, and two sets of one hundred MDPs are generated randomly. Theoretical understanding of the choice of sequence ${\theta_n}$ on bias in general is an interesting question for further research.

\end{document}